  \providecommand\BibTeX{{%
    \normalfont B\kern-0.5em{\scshape i\kern-0.25em b}\kern-0.8em\TeX}}}
\def\equationautorefname~#1\null{Equation~(#1)\null}
\definecolor{lightgray}{gray}{0.85}
\newcommand{\argmin}{\mathop{\rm arg~min}\limits}
\newcommand{\hide}[1]{}
\newcommand{\mypara}[1]{\vspace{1.00em}\noindent\textbf{#1.}}
\newcommand{\myparaitemize}[1]{\noindent{\textbf{#1.}}}
\newcommand{\method}{\textsc{DMM}\xspace}
\newcommand{\ticc}{TICC\xspace}
\newcommand{\tagm}{TAGM\xspace}
\newcommand{\cpdet}{CutPointDetector\xspace}
\newcommand{\cldet}{ClusterDetector\xspace}
\newcommand{\mergeseg}{SegmentConnector\xspace}
\newcommand{\id}{id}
\newcommand{\nsegment}{m}
\newcommand{\ncluster}{K}
\newcommand{\cp}{cp} 
\newcommand{\window}{w} 
\newcommand{\windowset}{\mathbf{\window}}
\newcommand{\assign}{f} 
\newcommand{\assignset}{\mathcal{F}} 
\newcommand{\modelset}{\Theta} 
\newcommand{\fullmodel}{\mathcal{M}}
\newcommand{\model}{\theta} 
\newcommand{\modelN}[1]{\model^{ (#1) }} 
\newcommand{\diagonal}{C}
\newcommand{\diagonalN}[1]{\diagonal^{ (#1) }}
\newcommand{\invcov}{A} 
\newcommand{\invcovN}[1]{\invcov^{ (#1) }} 
\newcommand{\invcovSca}{a} 
\newcommand{\invcovScaN}[1]{\invcovSca^{ (#1) }} 
\newcommand{\sparseparam}{\lambda} 
\newcommand{\tensor}{\mathcal{X}}
\newcommand{\slice}{X}
\newcommand{\element}{x}
\newcommand{\dimgen}{D}
\newcommand{\dimN}[1]{\dimgen_{#1}}
\newcommand{\dimgensmall}{d}
\newcommand{\probdim}{D}
\newcommand{\probdimN}[1]{\probdim^{(\backslash#1)}}
\newcommand{\ndim}{N}
\newcommand{\timestep}{T}
\newcommand{\reorderX}[1]{re(#1)}
\newcommand{\reorderXN}[2]{re(#1)^{ (#2) }}
\newcommand{\rest}{\{ -1 \}}
\newcommand{\vecX}[1]{vec(#1)}
\newcommand{\matXN}[2]{mat(#1)^{ (#2) }}
\newcommand{\npart}{G}
\newcommand{\npartsmall}{g}
\newcommand{\tts}{tensor time series\xspace}
\newcommand{\TTS}{TTS\xspace}
\newcommand{\thN}[1]{$#1^{th}$\xspace}
\newcommand{\norder}{$N^{th}$-order\xspace}
\newcommand{\secorder}{$2^{nd}$-order\xspace}
\newcommand{\triorder}{$3^{rd}$-order\xspace}
\newcommand{\Numorder}[1]{$#1^{th}$-order\xspace}
\newcommand{\Norder}[1]{$(#1)^{th}$-order\xspace}
\newcommand{\modeN}[1]{mode-#1}
\newcommand{\lone}{$\ell_1$\xspace}
\newcommand{\lonenorm}{\lone-norm\xspace}
\newcommand{\costA}[1]{Cost_A(#1)}
\newcommand{\costM}[1]{Cost_M(#1)}
\newcommand{\costL}[1]{Cost_{\ell_1}(#1)}
\newcommand{\costC}[2]{Cost_C(#1|#2)}
\newcommand{\costT}[2]{Cost_T(#1;#2)}
\newcommand{\cF}{c_F}
\newcommand{\loglike}{ll}
\newcommand{\loglikeN}[1]{\loglike_{#1}}
\newcommand{\quomark}[1]{ ``#1''\xspace}
\newcommand{\quomarkit}[1]{ ``\textit{#1}''\xspace}
\newcommand{\synA}{$\mathsf{A}$\xspace}
\newcommand{\synB}{$\mathsf{B}$\xspace}
\newcommand{\synC}{$\mathsf{C}$\xspace}
\newcommand{\synD}{$\mathsf{D}$\xspace}
\newcommand{\synI}{(i)\xspace}
\newcommand{\synII}{(ii)\xspace}
\newcommand{\dataID}[1]{\##1\xspace}
\newcommand{\clID}[1]{\##1\xspace}
\newcommand{\google}{Google Trends\xspace}
\newcommand{\covidnineteen}{COVID-19\xspace}
\newcommand{\ecommerce}{E-commerce\xspace}
\newcommand{\vod}{VoD\xspace}
\newcommand{\sweets}{Sweets\xspace}
\newcommand{\covid}{Covid\xspace}
\newcommand{\gafam}{GAFAM\xspace}
\newcommand{\ecommerceID}{\dataID{1}}
\newcommand{\vodID}{\dataID{2}}
\newcommand{\sweetsID}{\dataID{3}}
\newcommand{\covidID}{\dataID{4}}
\newcommand{\gafamID}{\dataID{5}}
\newcommand{\air}{Air\xspace}
\newcommand{\airID}{\dataID{6}}
\newcommand{\car}{Automobile\xspace}
\newcommand{\carA}{Car-A\xspace}
\newcommand{\carH}{Car-H\xspace}
\newcommand{\carAID}{\dataID{7}}
\newcommand{\carHID}{\dataID{8}}
\newcommand{\alg}[1]{Alg.~#1}
\newcommand{\eq}[1]{Eq.~(#1)}
\newcommand{\tabl}[1]{Table~#1}
\newcommand{\fig}[1]{Fig.~#1}
\newcommand{\apdx}[1]{Appendix~#1}
\newtheorem{problem}{Problem}
\newtheorem{lemma}{Lemma}
\newtheorem{definition}{Definition}
\begin{document}

\title{Dynamic Multi-Network Mining of Tensor Time Series}



\author{Kohei Obata}
\affiliation{%
  \institution{SANKEN, Osaka University, Japan}
  \country{}
}
\email{obata88@sanken.osaka-u.ac.jp}

\author{Koki Kawabata}
\affiliation{%
  \institution{SANKEN, Osaka University, Japan}
  \country{}
}
\email{koki@sanken.osaka-u.ac.jp}

\author{Yasuko Matsubara}
\affiliation{%
  \institution{SANKEN, Osaka University, Japan}
  \country{}
}
\email{yasuko@sanken.osaka-u.ac.jp}

\author{Yasushi Sakurai}
\affiliation{%
  \institution{SANKEN, Osaka University, Japan}
  \country{}
}
\email{yasushi@sanken.osaka-u.ac.jp}


\begin{abstract}
    Subsequence clustering of time series is an essential task in data mining, and interpreting the resulting clusters is also crucial since we generally do not have prior knowledge of the data.
Thus, given a large collection of tensor time series consisting of multiple modes, including timestamps, how can we achieve subsequence clustering for tensor time series and provide interpretable insights?
In this paper, we propose a new method,
\textit{Dynamic Multi-network Mining} (\method),
that converts a \tts into a set of segment groups of various lengths
(i.e., clusters) characterized by a dependency network constrained with \lonenorm.
Our method has the following properties.
(a)~\textbf{Interpretable}:
it characterizes the cluster with multiple networks,
each of which is a sparse dependency network of a corresponding non-temporal mode,
and thus provides visible and interpretable insights into the key relationships.
(b)~\textbf{Accurate}:
it discovers the clusters with distinct networks from tensor time series
according to the minimum description length (MDL).
(c)~\textbf{Scalable}:
it scales linearly in terms of the input data size
when solving a non-convex problem to optimize the number of segments and clusters,
and thus it is applicable to long-range and high-dimensional tensors.
Extensive experiments with synthetic datasets confirm that our method outperforms the state-of-the-art methods in terms of clustering accuracy.
We then use real datasets to demonstrate that \method is useful
for providing interpretable insights from tensor time series.
\end{abstract}

\begin{CCSXML}
<ccs2012>
<concept>
<concept_id>10002951.10003227.10003351</concept_id>
<concept_desc>Information systems~Data mining</concept_desc>
<concept_significance>500</concept_significance>
</concept>
<concept>
<concept_id>10002951.10003227.10003351.10003444</concept_id>
<concept_desc>Information systems~Clustering</concept_desc>
<concept_significance>300</concept_significance>
</concept>
</ccs2012>
\end{CCSXML}

\ccsdesc[500]{Information systems~Data mining}
\ccsdesc[300]{Information systems~Clustering}

\keywords{Tensor time series, Clustering, Network inference, Graphical lasso}


\maketitle

\section{Introduction}
    \label{010intro}
    The development of IoT has facilitated the collection of time series data,
including data related to automobiles~\cite{automobile}, medicine~\cite{hirano2006cluster,monti2014estimating},
and finance~\cite{NAMAKI20113835,Ruiz2012finance},
from multiple modes such as sensor type, locations and users,
which we call \tts (\TTS). 
An instance of such data is online activity data,
which records search volumes in three modes \{Query, Location, Timestamp\}.
These \TTS can often be divided and grouped into subsequences that have similar traits (i.e., clusters).
Time series subsequence clustering~\cite{tscreview,tssubclreview} is a useful unsupervised exploratory approach for recognizing dynamic changes and uncovering interesting patterns in time series.
As well as clustering data, the interpretability of the results is also important since we rarely know what each cluster refers to~\cite{inconco,stopblackbox}.
Modeling a cluster as a dependency network~\cite{ticc,tagm,clustergl}, where nodes are variables and an edge expresses a relationship between variables, gives a clear explanation of what the cluster refers to.
Considering that a \TTS consists of multiple modes~\cite{biotensor,tensorgds,tensorpgm}, a cluster should be modeled as multiple networks, where each is a dependency network of a corresponding non-temporal mode, to provide a good explanation.
%
%
In the above example, a cluster can be modeled as query and location networks, where each explains the relationships among queries/locations.
With these networks, we can understand why a particular cluster distinguishes itself from another and speculate about what happened during a period belonging to the cluster. 
%
Given such a \TTS, how can we find clusters with interpretability contributing to a better understanding of the data? 

Research on time series subsequence clustering has mainly focused on univariate or multivariate time series (UTS and MTS).
\TTS is a generalization of time series and includes UTS and MTS.
Here, we mainly assume that \TTS has three or more modes.
Generally, UTS clustering methods use distance-based metrics such as dynamic time warping~\cite{dtw}.
These methods focus on matching raw values and do not consider relationships among variables, which is essential if we are to interpret the MTS and \TTS clustering.
MTS clustering methods usually employ model-based clustering, which assumes, for example, a Gaussian~\cite{autoplait} or an ARMA~\cite{arma} model and attempts to find clusters that recover the data from the model.
The interpretability of the clustering results depends on the model they assume.
As a technique for interpretable clustering, TICC~\cite{ticc} models an MTS with a dependency network and discovers interpretable clusters that previously developed methods cannot find.
Nevertheless, \TTS clustering is a more challenging problem and cannot simply employ MTS methods due to the complexity of \TTS,
stemming from multiple modes, which introduces intricate dependencies and a massive data size.
To employ an MTS clustering method (e.g., TICC) for \TTS, the \TTS must be flattened to form a higher-order MTS.
As a result, the method processes the higher-order MTS and mixes up all the relationships between variables, which may capture spurious relationships and unnecessarily exacerbate the interpretability.
Moreover, its computational time increases greatly as the number of variables in a mode increases.

In this paper, we propose a new method for
\TTS subsequence clustering, which we call
\textit{Dynamic Multi-network Mining} (\method).~\footnote{
Our source code and datasets are publicly available:\\
\url{https://github.com/KoheiObata/DMM}.}
In our method, we define each cluster as multiple networks, each of which is a sparse dependency network of a corresponding non-temporal mode and thus can be seen as visual images that can help users quickly understand the data structure.
Our algorithm scales linearly with the input data size while employing the divide-and-conquer method and is thus applicable to long-range and high-dimensional tensors.
Furthermore, the clustering results and every user-defined parameter of our method can be determined by a single criterion based on the Minimum Description Length (MDL) principle~\cite{MDLbook}.
\method is a useful tool for \TTS subsequence clustering that enables multifaceted analysis and understanding of \TTS.

\subsection{Preview of our results}
\fig{\ref{fig:covid}} shows the \method results for clustering over \google data,
which consists of $10$ years of daily web search counts for six queries related to \covidnineteen across $10$ countries, forming a \triorder tensor.
\fig{\ref{fig:covid}}~(a) shows the cluster assignments of the \TTS,
where each color represents a cluster.
\method splits the tensor into four segments and groups them into four clusters,
each of which can be interpreted as a distinct phase corresponding to the evolving social response to \covidnineteen;
thus, we name these phases \quomarkit{Before Covid,} \quomarkit{Outbreak,} \quomarkit{Vaccine,} and \quomarkit{Adaptation.}
It is worth noting that this result is obtained with no prior knowledge.

\fig{\ref{fig:covid}}~(b) presents the networks of each cluster,
i.e., a country network, which has nodes plotted on the world map, reflects dependencies between different countries,
and a query network for query dependencies.
These networks, also known as a Markov Random Field (MRF)~\cite{MRFbook},
illustrate how the node affects the other nodes.
The thickness and color of the edges in the network indicate the strength of the partial correlation between the nodes,
which denotes a stronger relationship compared with a simple correlation.
We learn the networks by estimating a Gaussian inverse covariance matrix.
Then, by definition, if there is an edge between two nodes, the nodes are directly dependent on each other.
Otherwise, they are conditionally independent, given the rest of the nodes.
Moreover, we impose an \lonenorm penalty on the networks to promote sparsity,
making it possible to obtain true networks and interpretability, as well as making the method noise-robust~\cite{sparseMRF,sparsemodel}.
These networks provide visible and interpretable insights into the key relationships that characterize clusters.

\begin{figure}[t]
    \centering
    \begin{minipage}{1\columnwidth}
    \centering
    \includegraphics[width=1\linewidth]{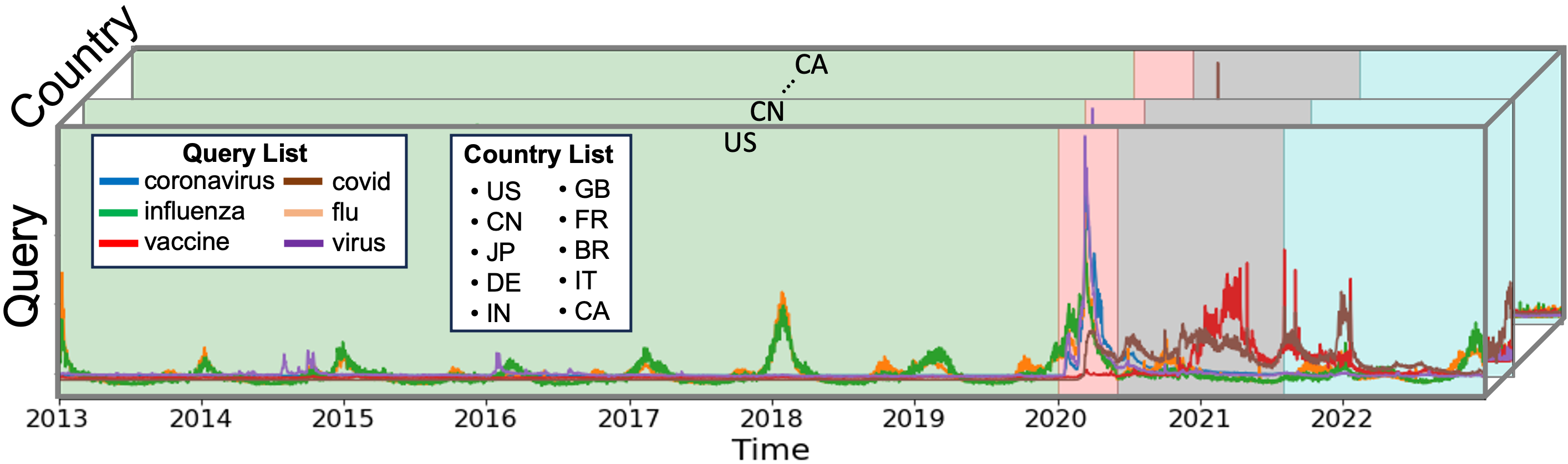} \\
    \vspace{0em}
    (a) Cluster assignments on the original tensor time series
    \end{minipage}
    \begin{minipage}{1\columnwidth}
    \centering
    \includegraphics[width=1\linewidth]{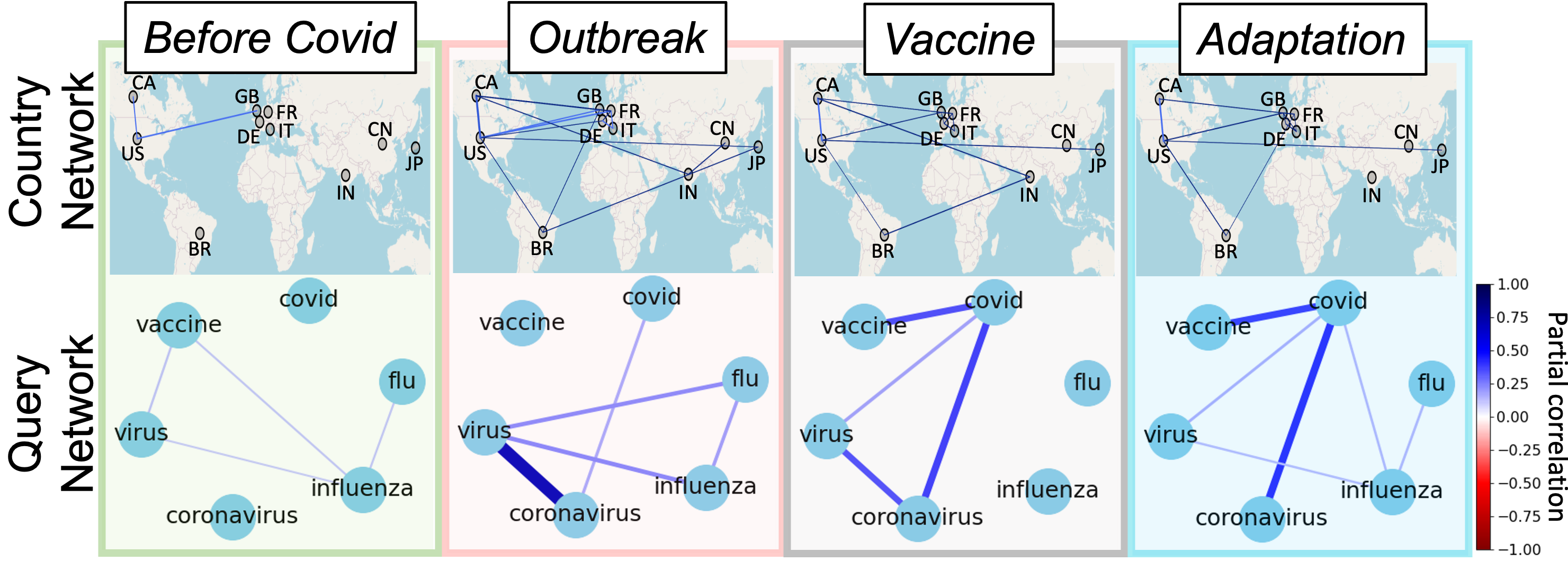}\\
    \vspace{0em}
    (b) Country and query networks change dynamically\\
    \end{minipage}

    \caption{
        Effectiveness of \method on \google (\covidID \covid) dataset:
        (a) \method can split the tensor time series into meaningful subsequence clusters shown by colors
        (i.e.,
        \#green$\rightarrow$\quomarkit{Before Covid},
        \#pink$\rightarrow$\quomarkit{Outbreak},
        \#gray$\rightarrow$\quomarkit{Vaccine},
        \#blue$\rightarrow$\quomarkit{Adaptation}),
        and
        (b) their important relationships between variables
        are summarized with country and query networks,
        where the nodes show individual variables,
        and the thickness and color of the edges are partial correlations showing the importance of its interaction.
        }
    \label{fig:covid}
    \vspace{-1em}
\end{figure}

We see that each of the four clusters exhibits unique networks that evolve with the different phases.
In the \quomarkit{Before Covid} phase, the country network displays edges between English-speaking countries, indicating their interconnectedness.
In the query network, the query \quomark{vaccine} correlates with \quomark{influenza.}
However, during the \quomarkit{Outbreak} starting in $2020$,
many countries respond to the \covidnineteen pandemic, leading to various edges in the country network.
In the query network of this phase, new edges related to \quomark{coronavirus} appear, and \quomark{coronavirus} and \quomark{virus} have a particularly strong connection.
In the \quomarkit{Vaccine} phase,
as people become more concerned about protection from \covidnineteen,
the query \quomark{vaccine} forms an edge with \quomark{covid.}
Moreover, since flu infects fewer people than in the past,
\quomark{influenza} loses its edges.
Lastly, during the \quomarkit{Adaptation} phase,
as the world becomes accustomed to the situation,
the country network reduces the number of edges,
and the edges related to \quomark{influenza} reappear,
reflecting a return to the networks observed in the \quomarkit{Before Covid} phase.


\subsection{Contributions}
In summary, we propose \method as a subsequence clustering method for \TTS based on the MDL principle 
that enables each cluster to be characterized by multiple networks.
The contributions of this paper can be summarized as follows.
\begin{itemize}
\item \textbf{Interpretable}:
\method realizes the meaningful subsequence clustering of \TTS,
where each cluster is characterized by sparse dependency networks for each non-temporal mode,
which facilitates the interpretation of the cluster from important relationships between variables.
\item \textbf{Accurate}:
We define a criterion based on MDL to discover clusters with distinct networks.
Thanks to the proposed criterion,
any user-defined parameters can be determined, and
\method outperforms its state-of-the-art competitors
in terms of clustering accuracy on synthetic data.
\item \textbf{Scalable}:
The proposed clustering algorithm in \method scales linearly as regards the input data size
and is thus applicable to long-range and high-dimensional tensors.
\end{itemize}

\myparaitemize{Outline}
The rest of the paper is organized as follows.
After introducing related work in Section 2, we present our problem and basic background in Section 3.
We then propose our model and algorithm in Sections 4 and 5, respectively. 
We report our experimental results in Sections 6 and 7.

\section{Related work}
    \label{020related}
    We review previous studies that are closely related to our work.

\myparaitemize{Time series subsequence clustering}
Subsequence clustering is an important task in time series data mining
whose benefits are the extraction of interesting patterns and the provision of valuable information,
and that can also be used as a subroutine of other tasks such as forecasting~\cite{autocyclone,spirit}. 
Time series subsequence clustering methods can be roughly separated into a distance-based method and a model-based method.
The distance-based method uses metrics such as dynamic time warping~\cite{dtw,dtw-knn,swamp}
and longest common subsequence~\cite{lcs} 
and finds clusters by focusing on matching raw values rather than structure in the data.
The model-based method assumes a model for each cluster, and finds the best fit of data to the model.
It covers a wide variety of models such as ARMA~\cite{arma},
Markov chain~\cite{mcc}, 
and Gaussian~\cite{autoplait}.
However, most previous work has focused on MTS and are not suitable for \TTS. 
Few studies have focused on \TTS clustering,
for example, CubeScope~\cite{cubescope} uses Dirichlet prior as a model to achieve online \TTS clustering, but it only supports sparse categorical data.
In summary, existing methods are not particularly well-suited to handling \TTS and discovering interpretable clusters.

\myparaitemize{Tensor time series}
\TTS are ubiquitous and appear in a variety of applications, such as recommendation and demand prediction~\cite{stg2seq,tatf,compcube}.
To model a tensor, tensor/matrix decomposition, such as Tucker/CP decomposition~\cite{tensorSIAM} and SVD, is a commonly used technique.
Although it obtains a lower-dimensional representation that summarizes important patterns from a tensor, it struggles to capture temporal information~\cite{getd}.
Therefore, it is often combined with dynamical systems to handle temporal information~\cite{mlds,facets,net3}.
For example, SSMF~\cite{ssmf}, which is an online forecasting method that uses clustering as a subroutine, combines a dynamical system with non-negative matrix factorization (NMF) to capture seasonal patterns from a \TTS.
Each cluster in SSMF is characterized by a lower-dimensional representation of a \TTS, however, understanding the representation is demanding.
Thus, tensor/matrix decomposition is not suitable for an interpretable model.

\myparaitemize{Sparse network inference}
Inferring a sparse inverse covariance matrix (i.e., network) from data helps us to understand the dependency of variables in a statistical way.
Graphical lasso~\cite{graphicallasso}, which maximizes the Gaussian log-likelihood imposing a \lonenorm penalty,
is one of the most commonly used techniques for estimating the sparse network from static data.
However, time series data are normally non-stationary, and the network varies over time; thus, to infer time-varying networks, time similarity with the neighboring network is usually considered~\cite{tvgl}.
The monitoring of such time-varying networks has been studied with the aim of analyzing economic data~\cite{NAMAKI20113835} and biological signal data~\cite{monti2014estimating} because of the high interpretability of the network~\cite{Tomasi21icpr}.
Although the inference of time-varying networks is able to find change points by comparing the networks before and after a change, it cannot find clusters~\cite{ltgl,tcorex,changenetwork}.
TICC~\cite{ticc} and TAGM~\cite{tagm} use graphical lasso and find clusters from time series based on the network of each subsequence,
providing the clusters with interpretability and allowing us to discover clusters
that other traditional clustering methods cannot find.
However, they cannot provide an interpretable insight when dealing with \TTS.
Consequently, past studies have yet to find networks for \TTS and a way to cluster \TTS based on the networks.
Our method uses a graphical lasso-based model modified to provide interpretable clustering results from \TTS.
\section{Problem formulation}
    \label{030preliminary}
    
In this section, we describe the \TTS we want to analyze,
introduce some necessary background material,
and define the formal problem of \TTS clustering.

The main symbols employed in this paper are described in \apdx{\ref{apd:sym}}.
Consider an \Norder{$\ndim+1$} \TTS $\tensor \in \mathbb{R}^{\dimN{1} \times \cdots \times \dimN{\ndim} \times \timestep}$,
where the \modeN{$(\ndim+1)$} is the time and its dimension is $\timestep$.
We can also rewrite the \TTS as a sequence of \norder tensors $\tensor =\{ \tensor_1, \tensor_2, \dots, \tensor_{\timestep} \}$,
where each $\tensor_{t} \in \mathbf{R}^{\dimN{1} \times \cdots \times \dimN{\ndim}} (1 \leq t \leq \timestep)$ denotes the observed data at the \thN{t} time step.

\subsection{Tensor algebra}
We briefly introduce some definitions in tensor algebra from tensor related literature~\cite{tensorSIAM,facets}.

\begin{definition}[Reorder]
Let the ordered sets $P^{(1)}, \dots, P^{(\npart)}$, where $P^{(\npartsmall)} = \{ p^{(\npartsmall)}_1, \dots, p^{(\npartsmall)}_{n_\npartsmall} \} \subset \{1,2,\dots,\ndim \}$,
be a partitioning of the modes $\{ 1, 2, \dots, \ndim \}$ s.t., $\sum_{\npartsmall}^{\npart} n_\npartsmall = \ndim$.
The reordering
of an \norder tensor $\tensor \in \mathbf{R}^{\dimN{1} \times \cdots \times \dimN{\ndim}}$ into ordered sets
is defined as 
$\reorderX{\tensor}^{ (P^{(1)},\dots,P^{(\npart)}) } \in \mathbf{R}^{J^{(1)} \times \dots \times J^{(\npart)}}$,
where $J^{(\npartsmall)} = \prod_{n \in P^{(\npartsmall)}} \dimN{n}$.
\end{definition}

Given a tensor $\tensor \in \mathbf{R}^{D^{(1)}_{1} \times \cdots \times D^{(1)}_{\ndim} \times D^{(2)}_1 \times \cdots \times D^{(\npart)}_{\ndim}}$,
we partition the modes into $\npart$,
$P^{(\npartsmall)} = \{ \npartsmall\ndim+1 , \cdots , \npartsmall(\ndim+1) \}$.
The element is given by
$\reorderX{\tensor}^{ (P^{(1)},\dots,P^{(\npart)}) }_{i^{(1)}, \dots, i^{(\npart)}} = \tensor_{d^{(1)}_{1},\dots,d^{(1)}_{\ndim},d^{(2)}_{1},\dots,d^{(\npart)}_{\ndim}}$,
where $i^{(1)} = 1 + \sum_{\npartsmall=1}^{\ndim}(d^{(1)}_{\npartsmall}-1) \prod_{n=1}^{\npartsmall-1}D^{(1)}_{n}$.

Special cases of reordering are vectorization and matricization.
Vectorization happens when $\npart=1$.
$\vecX{\tensor} = \reorderX{\tensor}^{(\rest)}  \in \mathbf{R}^{\probdim}$,
where $\probdim = \prod_{n=1}^{\ndim} \dimN{n}$ and $\rest$ refers to the remaining unset modes.
Mode-n matricization happens when $\npart=2$ and $P^{(1)}$ is a singleton.
$\matXN{\tensor}{n} = \reorderX{\tensor}^{ (\{ n \}, \rest)}  \in \mathbf{R}^{\dimN{n} \times \probdimN{n}}$,
where $\probdimN{n}
= \prod_{m=1 (m \neq n)}^{\ndim} \dimN{m}$.

\subsection{Graphical lasso}
We use graphical lasso as a part of our model.
Given the mode-(\ndim+1) matricization of the \Norder{\ndim+1} \TTS,
$\matXN{\tensor}{\ndim+1} \in \mathbb{R}^{\timestep \times \probdim}$,
the graphical lasso \cite{graphicallasso}
estimates the sparse Gaussian inverse covariance matrix (i.e., network) 
$\model \in \mathbb{R}^{\probdim \times \probdim}$,
also known as the precision matrix,
with which we can interpret
pairwise conditional independencies
among $\probdim$ variables,
e.g., if $\model_{i,j}=0$
then variables $i$ and $j$ are conditionally
independent given the values of all the other variables.
The optimization problem is given as follows:
\begin{align}\label{eq:gl}
\textrm{minimize}_{\model \in S^{p}_{++}}
    &\sparseparam||\model||_{od,1} - \sum_{t=1}^{\timestep} \loglike(\matXN{\tensor}{\ndim+1}_{t,},\model), \\
\loglike(\element,\model) = &-\frac{1}{2} (\element - \mu)^T \model (\element - \mu) \nonumber \\
    &+ \frac{1}{2} \log \textrm{det} \model -\frac{\probdim}{2} \log(2\pi) , \label{eq:ll}
\end{align}
where $\model$ must be a symmetric positive definite ($S^{p}_{++}$).
$\loglike(\element, \model)$ is the log-likelihood and $\mu \in \mathbf{R}^{\probdim}$ is the empirical mean of $\matXN{\tensor}{\ndim+1}$.
$\sparseparam \geq 0$ is a hyperparameter for determining
the sparsity level of the network,
and $\|\cdot\|_{od,1}$ indicates the off-diagonal \lonenorm.
Since \eq{\ref{eq:gl}} is a convex optimization problem,
its solution is guaranteed to converge to the global optimum with
the alternating direction method of multipliers (ADMM)~\cite{admm}
and can speed up the solution time.

\subsection{Network-based tensor time series clustering}
A real-world complex $\tensor$ cannot be expressed by a single static network because it contains multiple sequence patterns, each of which has a distinct relationship/network. 
Moreover, we rarely know the optimal number of clusters and cluster assignments in advance.
To address this issue, we want to provide an appropriate cost function and achieve subsequence clustering by minimizing the cost function.
We now formulate the network-based \TTS clustering problem.
%
It assumes that $\timestep$ time steps of $\tensor$ can be divided into
$\nsegment$~time segments based on $\ncluster$ networks (i.e., clusters).
Let $\cp$ denote a starting point set of segments,
i.e., $\cp=\{\cp_1,\cp_2,\dots,\cp_{\nsegment}\}$,
the $i$-th segment of $\tensor$ is denoted as $\tensor_{\cp_{i}:\cp_{i+1}}$
where $\cp_{\nsegment+1}=\timestep+1$.
We group each of the $\timestep$ points into one of the
$\ncluster$ clusters denoted by a cluster assignment set
$\assignset=\{\assign_1,\assign_2,\dots,\assign_{\ncluster}\}$,
where $\assign_k \subset \{1,2,\dots,\timestep \}$,
and we refer to all subsequences in the cluster $k$ as
$\tensor[\assign_{k}] \subset \tensor$.
Then, letting $\modelset$ be a model parameter set,
i.e., $\modelset=\{\model_1,\model_2, \dots,\model_{\ncluster}\}$,
each $\model_k \in \mathbb{R}^{\probdim \times \probdim}$
is a sparse Gaussian inverse covariance matrix
that summarizes the relationships of variables in $\tensor[\assign_{k}]$.
Therefore, the entire cluster parameter set is given by
$\fullmodel=\{\fullmodel_1,\fullmodel_2,\dots,\fullmodel_{\ncluster}\}$,
consisting of $\fullmodel_k=\{\model_k, \assign_k \}$.
Overall, the problem that we want to solve is written as follows.

\begin{problem}
\label{prob:clustering}
Given a \tts $\tensor$,
estimate:
\begin{itemize}
    \setlength{\parskip}{0cm}
    \setlength{\itemsep}{0cm}
    \item a cluster assignment set,
    $\assignset=\{\assign_k\}_{k=1}^{\ncluster}$
    \item a model parameter set,
    $\modelset=\{\model_k\}_{k=1}^{\ncluster}$
    \item the number of clusters $\ncluster$
\end{itemize}
that minimizes the cost function \eq{\ref{eq:total_cost}}.
\end{problem}
\section{Proposed \method}
    \label{040model}
    In this section, we propose a new model with which
to realize network-based \TTS clustering,
namely, \method.
We first describe our model $\model$,
and then we define the criterion for determining the cluster assignments and the number of clusters.

\subsection{Multimode graphical lasso}
Assume $\ncluster, \assignset$ are given,
here, we address how to define and infer the model $\model_k$.
The original graphical lasso allows $\model_k$ to connect
any pairs of variables in a tensor;
however, it is too high-dimensional to reveal relationships separately in terms of the non-temporal modes.
To avoid the over-representation,
we aim to capture the multi-aspect relationships
by separating $\model_k$ into multimode
to which we add a desired constraint for interpretability.

We assume that $\model$ is derived from $\ndim$ networks,
$\{ \invcovN{1}, \dots, \invcovN{\ndim} \}$,
where $\invcovN{n} \in \mathbf{R}^{\dimN{n} \times \dimN{n}}$ is the $n$-th network.
For example,
an element $\invcovScaN{n}_{i,j} \in \invcovN{n}$ refers to the relationship
between the $i$-th and $j$-th variables of mode-n,
In each network, the goal is to capture the dependencies between $\dimN{n}$ variables.
We also assume that there are no relationships except among variables that differ only at mode-n.
Thus,
$\model = \modelN{\ndim}$ becomes an \thN{\ndim} hierarchical matrix of shape $\probdim \times \probdim$.
$\modelN{n}$ can be written as follows:
{\small 
\begin{align} 
    \modelN{n} = 
    \begin{pmatrix}
        \modelN{n-1} & \diagonalN{n}_{1,2} & \cdots & \cdots  & \diagonalN{n}_{1,\dimN{n}} \\
        \diagonalN{n}_{2,1} & \modelN{n-1} & \cdots &  & \vdots \\
        \diagonalN{n}_{3,1} &  \diagonalN{n}_{3,2} & \cdots & \ddots & \vdots \\
        \vdots & \ddots & \cdots & \diagonalN{n}_{\dimN{n}-2,\dimN{n}-1} & \diagonalN{n}_{\dimN{n}-2,\dimN{n}} \\
        \vdots & & \cdots & \modelN{n-1} & \diagonalN{n}_{\dimN{n}-1,\dimN{n}} \\
        \diagonalN{n}_{\dimN{n},1} & \ddots & \cdots & \diagonalN{n}_{\dimN{n},\dimN{n}-1} & \modelN{n-1} \nonumber \\
    \end{pmatrix}
    ,
\end{align}
}%
where $\modelN{1} = \invcovN{1}$ and
$\diagonalN{n}_{i,j}\in\mathbb{R}^{ \prod_{m=1}^{n-1}\dimN{m} \times \prod_{m=1}^{n-1}\dimN{m} }$
is a diagonal matrix whose diagonal element is
$\invcovScaN{n}_{i,j}\in\invcovN{n}$, i.e.,
$\diagonalN{n}_{i,j}=\invcovScaN{n}_{i,j}\cdot\delta_{i.j}$ allows edges that differ only at mode-n,
where $\delta_{i.j}$ is the Kronecker delta.

We extend graphical lasso to obtain $\model$ by inferring a sparse $\invcovN{n}$ from a \TTS.
The optimization problem is written as follows:
{\small
\begin{align}
    \textrm{minimize}_{\invcovN{n} \in S^{p}_{++}} 
    & \sparseparam||\invcovN{n}||_{od,1} \nonumber \\
    & - \sum_{t}^{\timestep}
    \loglikeN{n}(\reorderXN{\tensor}{\{\ndim+1\}, \rest, \{n\}}_{t,:,:},\invcovN{n}) , \label{eq:gl_n} \\
    \loglikeN{n}(\reorderX{\tensor}_{t,:,:},\invcovN{n}) = \sum_{\dimgensmall=1}^{\probdimN{n}}
    &\{ -\frac{1}{2} (\reorderX{\tensor}_{t, \dimgensmall, :} - \mu_{\dimgensmall})^T \invcovN{n} (\reorderX{\tensor}_{t, \dimgensmall, :} - \mu_{\dimgensmall}) \nonumber \\
    &+ \frac{1}{2} \log \mathrm{det} \invcovN{n} - \frac{\dimN{n}}{2} \log(2\pi) \} / \probdimN{n}, \label{eq:ll_n}
\end{align}
}
where $\mu_{\dimgensmall} \in \mathbb{R}^{\dimN{n}}$ is the empirical mean of the variable
$\reorderX{\tensor}_{:, \dimgensmall, :} \in \mathbb{R}^{\timestep \times \dimN{n}}$.
\eq{\ref{eq:gl_n}} is a convex optimization problem solved by ADMM. 
We divide the log-likelihood by $\probdimN{n}$ to scale the sample size.

\subsection{Data compression}
To determine the cluster assignment set $\assignset$
and the number of clusters $\ncluster$,
we use the MDL principle~\cite{MDLbook},
which follows the assumption that the more we compress the data, the more we generalize its underlying structures.
The goodness of the model $\fullmodel$ can be described with
the following total description cost:
\begin{align}
    \costT{\tensor}{\fullmodel}\ = 
    &\costA{\assignset}+\costM{\modelset}+  \nonumber \\
    &\costC{\tensor}{\fullmodel} + \costL{\modelset}. \label{eq:total_cost}
\end{align}
We describe the four terms that appear in \eq{\ref{eq:total_cost}}.

\myparaitemize{Coding length cost}
$\costA{\assignset}$ is
the description complexity of the cluster assignment set $\assignset$,
which consists of the following elements:
the number of clusters $\ncluster$ and segments $\nsegment$ require $\log^*(\ncluster) + \log^*(\nsegment)$.
\footnote{Here, $\log^*$ is the universal code length for integers.}
The assignments of the segments to clusters require $\nsegment \times \log^*(\ncluster)$.
The number of observations of each cluster requires $\sum_{k=1}^{\ncluster}\log^*(|\assign_k|)$.
\begin{align}
    \costA{\assignset} = &\log^*(\ncluster) + \log^*(\nsegment) + \nonumber \\
    &\nsegment \times \log^*(\ncluster) + \sum_{k=1}^{\ncluster}\log^*(|\assign_k|) .
\end{align}

\myparaitemize{Model coding cost}
$\costM{\modelset}$ is
the description complexity of the model parameter set $\modelset$,
which consists of the following elements:
the diagonal values of each cluster at each hierarchy,
which has sizes $\dimN{n} \times 1$,
require $\dimN{n}(\log(\dimN{n}) + \cF)$,
where $\cF$ is the floating point cost.
\footnote{We used $4 \times 8$ bits in our setting.}
The positive values of $\invcovN{n} \in \mathbf{R}^{\dimN{n} \times \dimN{n}}$ require
$|\invcovN{n}_{k}|_{\neq 0}(\log(\dimN{n}(\dimN{n}-1)/2) + \cF)$,
where $|\cdot|_{\neq 0}$ describes the number of non-zero elements in a matrix.
\begin{align}
    \costM{\modelset} = 
    &\sum_{k=1}^{\ncluster} \sum_{n=1}^{\ndim}
    \{ \dimN{n} (\log(\dimN{n}) + \cF) + \log^*(|\invcovN{n}_{k}|_{\neq 0}) + \nonumber \\
    &|\invcovN{n}_{k}|_{\neq 0}(\log(\dimN{n}(\dimN{n}-1)/2) + \cF) \}/ (\dimN{n}^2 \ndim) .
\end{align}
We divide by $\dimN{n}^2 \ndim$ to deal with the change of data scale.

\myparaitemize{Data coding cost}
$\costC{\tensor}{\fullmodel}$ is the data encoding cost of $\tensor$
given the cluster parameter set $\fullmodel$.
Huffman coding \cite{haffman} uses the logarithm of
the inverse of probability
(i.e., the negative log-likelihood) of the values. 
\begin{align}
    \costC{\tensor}{\fullmodel} = 
    \sum_{k=1}^{\ncluster} \sum_{n=1}^{\ndim} \sum_{t \in \assign_{k}}
    \loglikeN{n}(\reorderXN{\tensor}{\{ \ndim+1 \}, \rest, \{ n \}}_{t, :, :},\invcovN{n}_k) .
\end{align}

\myparaitemize{\lonenorm cost}
$\costL{\modelset}$ is the \lonenorm cost given a model $\modelset$.
\begin{align}
    \costL{\modelset} = 
    \sum_{k=1}^{\ncluster} \sum_{n=1}^{\ndim}
    &\sparseparam||\invcovN{n}_k||_{od,1} .
\end{align}
Discovering an optimal sparse parameter $\sparseparam$ capable of modeling data is a challenge as it affects clustering results.
However, the parameter value can be determined by using MDL to choose the minimum total cost~\cite{glMDL}.

Our next goal is to find the best cluster parameter set $\fullmodel$
that minimizes the total description cost \eq{\ref{eq:total_cost}}.
\section{Optimization algorithms}
    \label{050algorithm}
    
\begin{algorithm}[t]
    \small
    \caption{\textsc{\method}$(\tensor,\windowset)$}
    \label{alg:clustering}
    \begin{algorithmic}[1]
        \STATE {\bf Input:} \Norder{\ndim+1} \TTS $\tensor$
        and initial segment sizes set $\windowset$
        \STATE {\bf Output:} Cluster parameters $\modelset$ and cluster assignments $\assignset$
        \STATE Initialize $\cp$ with $\mathbf{w}$;
        \STATE $\cp=$ \textsc{\cpdet}$(\tensor,\cp)$;
            \ \ /* Finds the best cut point set */
        \STATE /* \textsc{\cldet} */
        \STATE $\ncluster=1$;
            \ \ Initialize $\modelset=\{\model_1\}$;
            \ \ $\assignset=\{ \{1,\dots,\timestep\} \}$;
        \STATE Compute $\costT{\tensor}{ \{ \modelset,\assignset \} }$;
        \REPEAT
		\STATE $\ncluster=\ncluster+1$;
                \ \ Initialize $\modelset$ for $\ncluster$ clusters;
		\REPEAT
			\STATE
			    $\assignset=$ \textsc{SegmentAssignment}$(\tensor,\modelset,\cp)$;
			    \ \ /* E-step */
   			\STATE $\modelset=$ \textsc{NetworkInference}$(\tensor,\assignset)$;
			    \ \ /* M-step */
		\UNTIL{$\assignset$ is stable;}
		\STATE Compute $\costT{\tensor}{ \{ \modelset,\assignset \} }$;
        \UNTIL{$\costT{\tensor}{ \{ \modelset,\assignset \}}$ converges;}
        \RETURN $\fullmodel=\{\modelset,\assignset\}$;
    \end{algorithmic}
    \normalsize
\end{algorithm}

Thus far, we have described our model based on graphical lasso and a criterion based on MDL.
The most important question is how to discover
good segmentation and clustering.
Here, we propose an effective and scalable algorithm,
which finds the local optimal of \eq{\ref{eq:total_cost}}.
The overall procedure is summarized in
\alg{\ref{alg:clustering}}.
Given an \Norder{\ndim+1} \TTS~$\tensor$,
the total description cost \eq{\ref{eq:total_cost}} is minimized
using the following two sub-algorithms.
\begin{enumerate}
    \setlength{\parskip}{0cm}
    \setlength{\itemsep}{0cm}
    \item \cpdet:
    finds the number of segments $\nsegment$ and their cut points,
    i.e., the best cut point set $\cp$ of $\tensor$.
    \item \cldet:
    finds the number of clusters $\ncluster$
    and the cluster parameter set $\fullmodel$.
\end{enumerate}

\subsection{\cpdet}
The first goal is to divide a given $\tensor$
into $\nsegment$ segments (i.e., patterns),
but we assume that no information is known about them in advance.
Therefore, to prevent a pattern explosion when searching for
their optimal cut points,
we introduce \cpdet based on the divide-and-conquer method~\cite{bottomup}.

Specifically, it recursively merges a small segment set of $\tensor$
while reducing its total description cost,
because neighboring subsequences typically exhibit the same pattern.
We define $\windowset$ as a set of user-defined initial segment sizes,
i.e., $\windowset=\{\window_i\}_{i=1}^{\nsegment}$,
such as the number of days in each month or any small constant.
An example illustration is shown in \fig{\ref{fig:candidates}}.
Let $\model_{i:i+1}$ be a model of 
$\tensor\{\cp_i:\cp_{i+1}\}$ at the \thN{i} segment.
%
%
Given the three subsequent segments illustrated in \fig{\ref{fig:candidates}}~(a),
we evaluate whether to merge the middle segment with either of the side segments (\fig{\ref{fig:candidates}}~(b)(c)).
The total description cost for \fig{\ref{fig:candidates}}~(a) is given by
$\costT{\tensor}{ \{ \model_{i:i+1}, \model_{i+1:i+2}, \model_{i+2:i+3} \}}$, where we omit the cluster assignment (e.g., $\{j\}_{j=\cp_i}^{\cp_{i+1}-1} \}$) from the cost for clarity.
If the cost for the original three segments
is reduced by merging,
it eliminates the unnecessary cut point
and employs a new model $\model$ for the merged segment.
By repeating this procedure for each segment,
$\nsegment$ decreases monotonically until convergence.
See \apdx{\ref{apd:alg}} for the detailed procedure.

\hide{
In \cpdet, we update $\cp$ using \mergeseg by each iteration and find optimal $\cp$.
Let $\cp^j$ be cut points of iteration $j$.
Let $\window_i = \cp_i - \cp_{i-1}$ be a size of segment $i$.
$\cp^0$ is a hyperparameter that allows setting each $\window_i$ arbitrarily, e.g., every week, every $5$ points, etc.
Without loss of generality, we assume $\window_i (s.t. i=1,2,\dots,m)$ at $\cp^0$ to be constant. 
Thus, $\cp^0 = \lbrace \cp_1,\cp_2,\dots,\cp_{\nsample/\window}\rbrace$.
We initially have numerous segments to merge to summarize similar subtensors into a compact model, and thus, we modify the bottom-up algorithm to prevent a pattern explosion.
We update $\cp$ through \mergeseg until $\cp$ is stable, i.e., $\cp$ is optimal.
%
We update $\cp$ by recursively determining if a segment should be merged with its neighboring segment, assuming that neighboring segments tend to belong to the same cluster.
We consider having $\cp^j$ and update to $\cp^{j+1}$.
Let's see the case of segment $i$.
Let $\model_{i:i+l}$ be a \hinet of 
$\slice_{\cp_i:\cp_{i+l}}$.
\fig{\ref{fig:candidates}} shows three candidates as updated cut points:
(a) Solo has three segments all separated, (b) Left and (c) Right have two segments in which one side is merged.
We compare the cost \eq{\ref{eq:total_cost}} in these three cases, (a) vs. (b) vs. (c), 
and select the best cut points so that they minimize the cost involved in these three segments.
For example, if (b) has the lowest cost,
$\cp^j_{i+2}$ is added to the updated cut points $\cp^{j+1}$ and do the same process from segment $i+2$.
If (a) has the lowest cost,
there is no change from the previous cut points with regard to segment $i$ and the same process is applied from segment $i+1$.
This process is repeated throughout the whole segment.
}

\newcolumntype{P}[1]{>{\centering\arraybackslash}p{#1}}
\begin{figure}[t]
    \centering
    \includegraphics[width=0.95\linewidth]{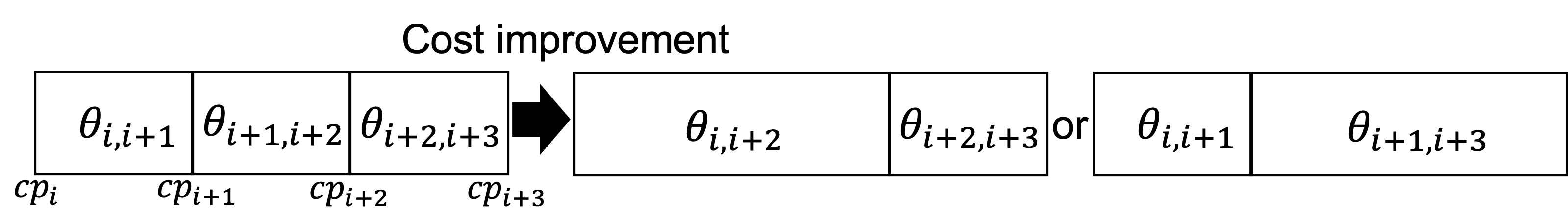}
    \begin{tabular}{P{0.3\linewidth}P{0.3\linewidth}P{0.3\linewidth}}
    (a) Original cp & (b) Left merge & (c) Right merge
    \end{tabular}
    \vspace{-1em}
    \caption{
        Illustration of the three candidates.
	We compare the total description cost of each of these candidates.
        }
    \label{fig:candidates}
    \vspace{-1em}
\end{figure}

\subsection{\cldet}
\method searches for the best number of clusters
by increasing $\ncluster=1,2,\dots,\nsegment$,
while the total description cost~$\costT{\tensor}{\fullmodel}$
is decreasing. 
To compute the cost, however, we must solve two problems,
namely obtain the cluster assignment set $\assignset$
and the model parameter set $\modelset$,
either of which affects the optimization of the other.
Therefore, we design \cldet with the expectation and maximization (EM) algorithm.
In the E-step, it determines $\assignset$
to minimize the data coding cost, $\costC{\tensor}{\fullmodel}$,
which is achieved by solving:
\begin{align}
    \argmin_{k\in\{1,\dots,\ncluster\}}
        \costC{\tensor}{ \{ \model_k,\{j\}_{j=\cp_i}^{\cp_{i+1}-1} \} },
\end{align}
for the $i$-th segment, and then
inserts time points from $\cp_i$ to $\cp_{i+1}$ (i.e., $\{j\}_{j=\cp_i}^{\cp_{i+1}-1}$)
to the best $k$-th cluster $\assign_k\in\assignset$.
In the M-step, for $1 \leq k \leq \ncluster$ the algorithm infers $\invcovN{n}_k (1 \leq n \leq \ndim)$ according to \eq{\ref{eq:gl_n}}
to obtain $\model_k\in\modelset$ for a given $\tensor[\assign_k]$.
Note that
\cldet starts by randomly initializing $\modelset$.

\mypara{Theoretical analysis}
\begin{lemma} \label{lemma:time}
    The time complexity of \method is $O( \timestep \prod_{m=1}^\ndim \dimN{m})$,
    where $\timestep$ is the data length, and $\dimN{m}$ is the number of variables at \modeN{m} in \Norder{$\ndim+1$} \TTS $\tensor \in \mathbb{R}^{\dimN{1} \times \cdots \times \dimN{\ndim} \times \timestep}$.
\end{lemma}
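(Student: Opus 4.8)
The plan is to charge the running time of \alg{\ref{alg:clustering}} to its two sub-routines, \cpdet and \cldet, to show that inside each one the only work whose cost grows with the data length $\timestep$ is a bounded number of sweeps over the tensor, and then to observe that one such sweep costs $O(\timestep\prod_{m=1}^{\ndim}\dimN{m})$, i.e.\ linear in the number of entries of $\tensor$. First I would fix the quantities that are held constant: the number of initial segments $\nsegment$ (hence the range $\ncluster\in\{1,\dots,\nsegment\}$ swept by \cldet), the number of EM rounds until $\assignset$ stabilises, the number of ADMM iterations used to solve \eq{\ref{eq:gl_n}}, and the non-temporal dimensions $\dimN{1},\dots,\dimN{\ndim}$ themselves. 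Under this convention the nested loops of \cldet and the recursion of \cpdet add only a constant multiplicative factor, so it suffices to bound a single E-step, a single M-step, and a single merging pass.

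For the E-step, assigning the $i$-th segment means evaluating $\costC{\tensor}{\cdot}$ --- i.e.\ the log-likelihoods $\loglikeN{n}$ of \eq{\ref{eq:ll_n}} --- for each of the $\ncluster$ candidate models and each of the $\ndim$ modes. For a fixed time step $t$ and mode $n$, $\loglikeN{n}$ is a sum of $\probdimN{n}=\prod_{m\neq n}\dimN{m}$ quadratic forms in the length-$\dimN{n}$ fibres $\reorderX{\tensor}_{t,\dimgensmall,:}-\mu_{\dimgensmall}$, while $\log\det\invcovN{n}$ is precomputed once per M-step at cost $O(\dimN{n}^{3})$; hence one $\loglikeN{n}$ evaluation reads each of the $\prod_m\dimN{m}$ entries of the time-$t$ slice a bounded number of times, plus the $\invcovN{n}$-times-fibre multiplications discussed in the last paragraph. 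Summing over $t\in\assign_k$, over $k$ (so that $\sum_k|\assign_k|=\timestep$), over $n$, and over the $O(\nsegment)$ segments gives an E-step cost of $O(\timestep\prod_m\dimN{m})$; the $\reorderX{\cdot}$ reshapes that produce $\reorderXN{\tensor}{\{ \ndim+1 \}, \rest, \{ n \}}$ are one permutation of the entries and cost the same.

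For the M-step, \eq{\ref{eq:gl_n}} is solved by ADMM over the $\dimN{n}\times\dimN{n}$ matrix $\invcovN{n}_k$; each ADMM step does an eigen-decomposition and a soft-threshold, $O(\dimN{n}^{3})$, with no dependence on $\timestep$, so over all $\ncluster$ clusters and $\ndim$ modes this is $O(1)$ in the data length. The only $\timestep$-dependent part is forming the empirical second moments $\sum_{t\in\assign_k}\sum_{\dimgensmall}(\reorderX{\tensor}_{t,\dimgensmall,:}-\mu_{\dimgensmall})(\reorderX{\tensor}_{t,\dimgensmall,:}-\mu_{\dimgensmall})^{T}$ fed to ADMM, again a single sweep of the data, $O(\timestep\prod_m\dimN{m})$; the terms $\costA{\assignset}$, $\costM{\modelset}$ and $\costL{\modelset}$ depend only on $\ncluster$, $\ndim$, $\nsegment$ and the $\le\dimN{n}^{2}$ non-zeros of each network, not on $\timestep$. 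Finally, one pass of \cpdet evaluates, for each of the $O(\nsegment)$ current segments, the total description cost of the three candidates in \fig{\ref{fig:candidates}}; each candidate needs a network fit and a likelihood score on a union of $O(1)$ neighbouring segments --- exactly the operations just analysed, restricted to a subtensor --- and each time step lies in only $O(1)$ such triples, so the pass costs $O(\timestep\prod_m\dimN{m})$, and since merging makes $\nsegment$ decrease monotonically the recursion halts after $O(\nsegment)$ passes. Adding the three bounds and absorbing the constant loop counts yields $O(\timestep\prod_{m=1}^{\ndim}\dimN{m})$.

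The main obstacle I expect is the bookkeeping that keeps stray polynomial factors out of the bound. One must state explicitly that $\ncluster$, $\nsegment$, the EM and ADMM iteration counts, and the non-temporal dimensions are all treated as constants; in particular the factor $\max_n\dimN{n}$ produced by the matrix--vector multiplications inside the quadratic forms, and again by the rank-one updates that build the second-moment matrices, must be regarded as subsumed into the notion of ``input data size'' (the non-temporal modes being small relative to $\timestep$) rather than carried explicitly --- otherwise the sharp statement reads $O(\timestep(\max_n\dimN{n})\prod_m\dimN{m})$. One must also check that \cpdet and the EM loop are each entered only a bounded number of times and that within a single pass no time step is re-scored more than $O(1)$ times. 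Once that accounting is pinned down, linearity in the input size is immediate, since every surviving $\timestep$-dependent operation is one scan of the $\timestep\prod_m\dimN{m}$ entries of $\tensor$.
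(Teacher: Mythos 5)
Your proposal follows essentially the same route as the paper's proof: both charge the running time to the \cpdet/\cldet iterations, bound each network inference and likelihood evaluation by a linear sweep over the $\timestep\prod_{m=1}^{\ndim}\dimN{m}$ entries of $\tensor$, and absorb the iteration counts (the paper via the halving assumption giving $\log_2|\windowset|$ \cpdet rounds and the condition $\timestep \gg \log_2|\windowset|$; you via treating $\nsegment$ and the EM/ADMM round counts as constants). You are in fact slightly more explicit than the paper, which states the per-iteration inference cost as $O(\dimN{n}\timestep\prod_{m\neq n}\dimN{m})$ and thereby silently drops the extra $\dimN{n}$ factor incurred in forming the mode-$n$ second-moment matrices, and which does not discuss the E-step at all, whereas you flag both points directly.
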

\begin{proof}
    Please see \apdx{\ref{apd:proof}}.
\end{proof}

\section{Experiments}
    \label{060experiments}
    In this section, we demonstrate the effectiveness of \method on synthetic data.
We use synthetic data because there are clear ground truth networks with which to test the clustering accuracy.

\subsection{Experimental setting}
\subsubsection{Synthetic datasets}
We randomly generate synthetic \Norder{$\ndim+1$} \TTS, $\tensor \in \mathbb{R}^{\dimN{1} \times \cdots \times \dimN{\ndim} \times \timestep}$,
which follows a multivariate normal distribution $\vecX{\tensor_{t}} \sim \mathcal{N}(0,\model^{-1})$.
Each of the $\ncluster$ clusters has a mean of $\vec{0}$, so that the clustering results are based entirely on the structure of the data.
For each cluster, we generate a random ground truth inverse covariance matrix $\model$ as follows~\cite{mohan14, ticc}:
\begin{enumerate}
    \setlength{\parskip}{0cm}
    \setlength{\itemsep}{0cm}
    \item For  $n = 1, \dots \ndim$, set $\invcovN{n} \in \mathbb{R}^{\dimN{n} \times \dimN{n}}$ equal to the adjacency matrix of an Erd\H{o}s-R\'{e}nyi directed random graph, where every edge has a $20\%$ chance of being selected.
    \item For every selected edge in $\invcovN{n}$,
    set $\invcovScaN{n}_{i,j}\sim$ Uniform$([-0.6,-0.3]\cup[0.3,0.6])$.
    We enforce a symmetry constraint whereby every $\invcovScaN{n}_{i,j}=\invcovScaN{n}_{j,i}$.
    \item Construct a hierarchical matrix $\model_{tem} \in \mathbb{R}^{\dimgen \times \dimgen}$ using $\{ \invcovN{n} \}_{n=1}^{\ndim}$.
    \item Let $c$ be the smallest eigenvalue of $\model_{tem}$, and set~$\model=\model_{tem}+(0.1+|c|)I$, where $I$ is an identity matrix.
    This ensures that $\model$ is invertible.
\end{enumerate}

\subsubsection{Evaluation metrics}
We run our experiments on four different temporal sequences:
\synA: \quomark{1,2,1},
\synB: \quomark{1,2,3,2,1},
\synC: \quomark{1,2,3,4,1,2,3,4},
\synD: \quomark{1,2,2,1,3,3,3,1},
(for example, \synA consists of three segments and two clusters $\model_{1}$ and $\model_{2}$.)
We set each cluster in each example to have $100G$ observations,
where $G$ is the number of segments in each cluster (e.g., \synA has $\timestep=300$),
and cut points are set randomly.
We generate each dataset ten times and report the mean of the macro-F1 score.

\subsubsection{Baselines}
We compare our method with the following two state-of-the-art methods for time series clustering using the graphical lasso as their model.
\begin{itemize}
    \setlength{\parskip}{0cm}
    \setlength{\itemsep}{0cm}
    \item \tagm~\cite{tagm}: combines HMM with a graphical lasso by modeling each cluster as a graphical lasso and assuming clusters as hidden states of HMM.
    \item \ticc~\cite{ticc}: uses the Toeplitz matrix to capture lag correlations and inter-variable correlations and penalizes changing clusters to assign the neighboring segments to the same cluster.
\end{itemize}
We do not compare with other clustering methods that ignore the network, such as K-means and DTW, because they do not show good results~\cite{ticc}.

\subsubsection{Parameter tuning}
\method and the baselines require a sparsity parameter for \lonenorm.
We varied $\lambda = \{0.5, 1, 2, 4\}$ and set $\lambda = 4$ for \method and $\lambda = 0.5$ for the baselines, which produces the best results.
A matricization of tensor $\matXN{\tensor}{\ndim+1} \in \mathbb{R}^{\timestep \times \probdim}$
and the true number of clusters are given to the baselines since the number of clusters need to be set.
To tune \ticc, we varied the regularization parameter $\beta = \{4, 16, 64, 256\}$
and set $\beta = 16$,
and set the window size $w = 1$,
which is the correct assumption considering the data generation process.
\method requires us to specify $\windowset$.
We use the same $\window_i$ (s.t., $i=1,\dots,\nsegment$) for all initial segments,
and we set $\window_i = 4$.

\subsection{Results}
\subsubsection{Clustering accuracy}
We take four different temporal sequences
\synA $\sim$ \synD,
and two different data sizes \synI and \synII to observe the ability of \method as regards clustering \TTS.
\tabl{\ref{table:accuracy}} shows the clustering accuracy for the macro-F1 scores for each dataset.
$^\dag$ shows \tagm and \ticc set the number of clusters $K = \{2, 3, 4, 5\}$ by Bayesian information criterion (BIC).
As shown, \method outperforms the baselines in most of the datasets, even for the \synI \secorder \TTS datasets.
In particular, the difference in \synII is even more noteworthy.
Because \tagm and \ticc cannot handle \triorder \TTS due to the limitation imposed by the matricization of the tensor.

\begin{table}[t]
    \caption{
    Macro-F1 score of clustering accuracy for eight different temporal sequences,
    comparing \method with state-of-the-art methods (higher score is better).
    Best results are in \textbf{bold}, and second best results are \underline{underlined}.
    $^\dag$ indicates a method where the number of clusters is set by BIC.
    \synI: \secorder \TTS $\dimN{1}=10$,
    \synII: \triorder \TTS $\dimN{1}=\dimN{2}=10$,
    \synA: \quomark{1,2,1},
    \synB: \quomark{1,2,3,2,1},
    \synC: \quomark{1,2,3,4,1,2,3,4},
    \synD: \quomark{1,2,2,1,3,3,3,1.}
    }
    \vspace{-1em}
    \label{table:accuracy}
    \centering
    \fontsize{7.5pt}{7.5pt}\selectfont
        \begin{tabular}{cc||c|c|c|c|c}
        \toprule
            \multicolumn{2}{c}{Data} & \method &  \tagm &  \tagm$^\dag$ & \ticc &  \ticc$^\dag$ \\
            \midrule\midrule
            \multirow{4}{*}{\synI} & \synA & $\underline{0.955}$ & $0.915$ & $0.915$ & $\mathbf{0.997}$ & $\mathbf{0.997}$\\
            & \synB & $\mathbf{0.926}$ & $\underline{0.897}$ & $0.756$ & $0.884$ & $0.825$\\
            & \synC & $\mathbf{0.956}$ & $0.770$ & $\underline{0.811}$ & $0.725$ & $0.756$\\
            & \synD & $\mathbf{0.960}$ & $0.907$ & $0.912$ & $0.857$ & $\underline{0.952}$\\
            \midrule
            \multirow{4}{*}{\synII} & \synA & $\mathbf{0.961}$ & $0.514$ & $0.514$ & $\underline{0.932}$ & $0.923$\\
            & \synB & $\mathbf{0.962}$ & $0.462$ & $0.431$ & $\underline{0.844}$ & $0.770$\\
            & \synC & $\mathbf{0.941}$ & $0.359$ & $0.396$ & $\underline{0.704}$ & $0.594$\\
            & \synD & $\mathbf{0.980}$ & $0.438$ & $0.432$ & $\underline{0.838}$ & $0.741$\\
        \bottomrule
        \end{tabular}
    \vspace{-1em}
\end{table}

\subsubsection{Effect of total number of variables}
We next examine how the number of variables $\dimN{1}$ affects each method as regards accurately finding clusters.
We take the \synC example and vary $\dimN{1}=5 \sim 50$ for (a) \secorder \TTS and (b) \triorder \TTS.
As shown in \fig{\ref{fig:acc}},
our method outperforms the baselines for all $\dimN{1}$ in both tensors.
The performance of \tagm and \ticc worsens as $\dimN{1}$ increases,
while \method maintains its performance even though $\dimN{1}$ increases
due to our well-defined total description cost that can handle the change in data scale.
\tagm and \ticc are less accurate in \fig{\ref{fig:acc}}~(b) than \fig{\ref{fig:acc}}~(a) since they cannot deal with \triorder \TTS.

\subsubsection{Scalability}
We perform experiments to verify the time complexity of \method.
As described in Lemma \ref{lemma:time}, the time complexity of \method scales linearly in terms of the data size.
\fig{\ref{fig:time}} shows the computation time of \method
when we vary $\dimN{1}$ (\fig{\ref{fig:time}}~(a)) and $\timestep$ (\fig{\ref{fig:time}}~(b)).
Thanks to our proposed optimization algorithm,
the time complexity of \method scales linearly with $\dimN{n}$ and $\timestep$.

\begin{figure}[t]
    \centering
    \begin{tabular}{cc}
    \hspace{-1em}
    \begin{minipage}{0.5\columnwidth}
    \centering
    \includegraphics[width=1\linewidth]{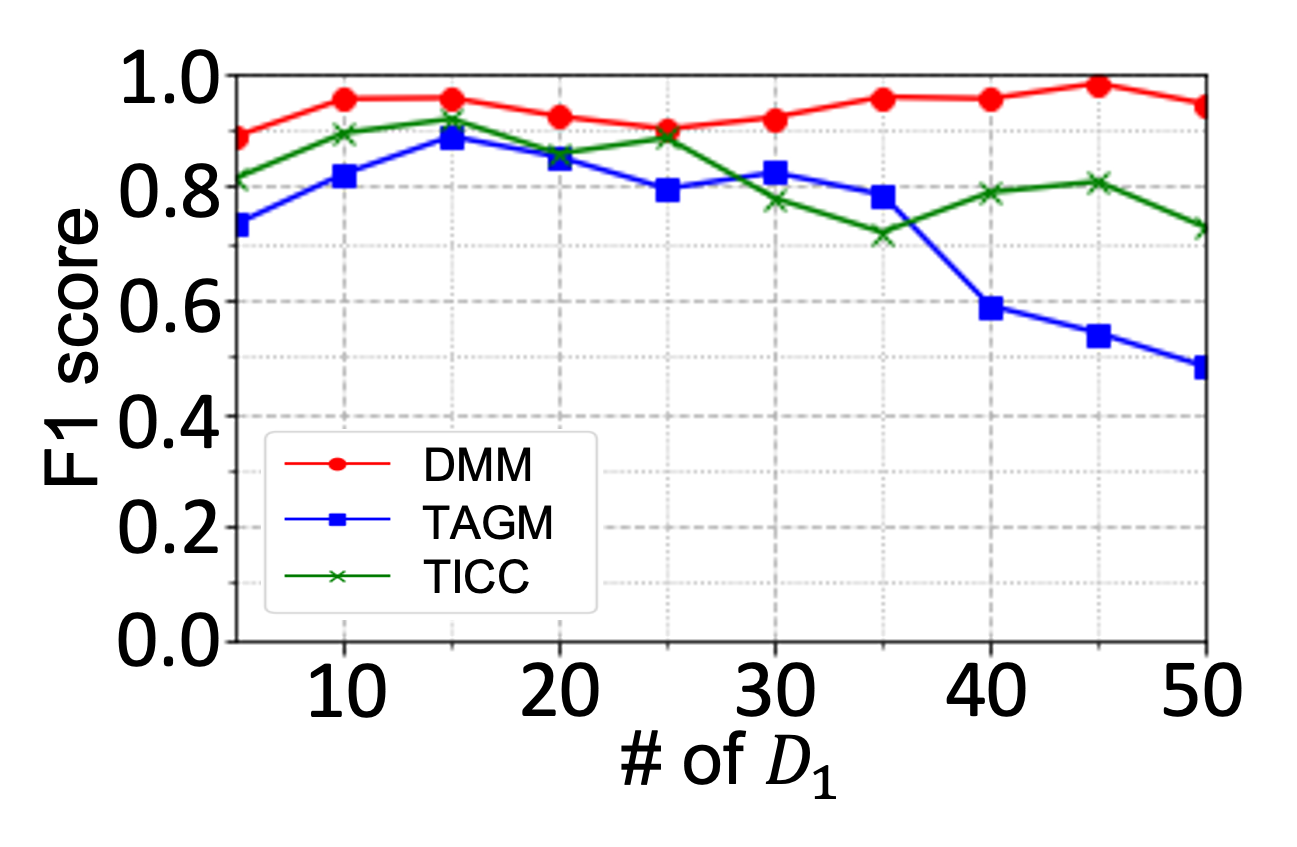} \\
    \vspace{-0.5em}
    (a) vs. \secorder \TTS
    \end{minipage}
    &
    \hspace{-1em}
    \begin{minipage}{0.5\columnwidth}
    \centering
    \includegraphics[width=1\linewidth]{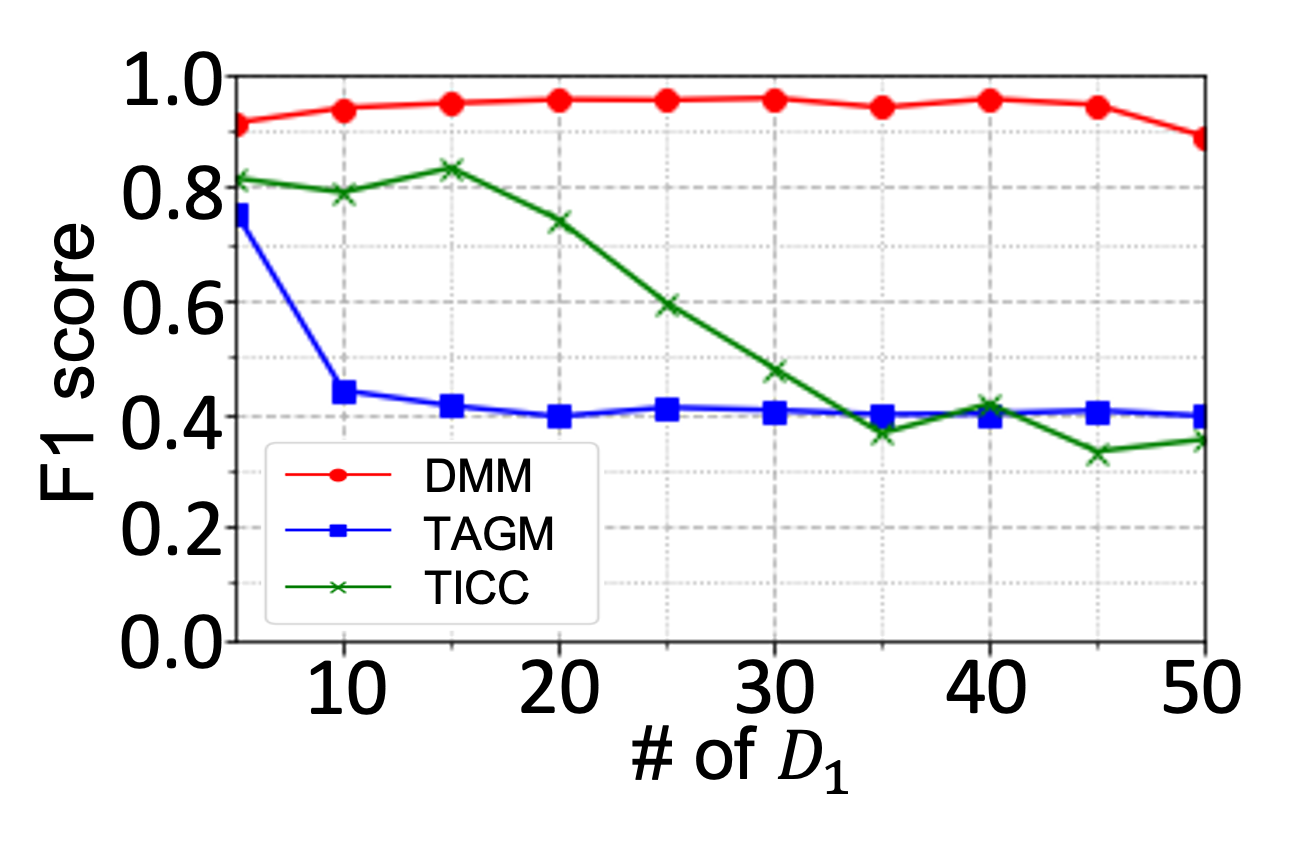} \\
    \vspace{-0.5em}
    (b) vs. \triorder \TTS
    \end{minipage}
    \end{tabular}
    \vspace{-1em}
    \caption{
    \method outperforms the state-of-the-art methods:
    Clustering accuracy for synthetic data, macro-F1 score vs. data size,
    i.e.,
    (a) \secorder \TTS $(\dimN{1}, \timestep) = (5 \sim 50, 800)$,
    (b) \triorder \TTS $(\dimN{1}, \dimN{2}, \timestep) = (5 \sim 50, 5, 800)$.
    }
    \label{fig:acc}
    \vspace{-1em}
\end{figure}

\begin{figure}[t]
    \centering
    \begin{tabular}{cc}
    \hspace{-1em}
    \begin{minipage}{0.5\columnwidth}
    \centering
    \includegraphics[width=1\linewidth]{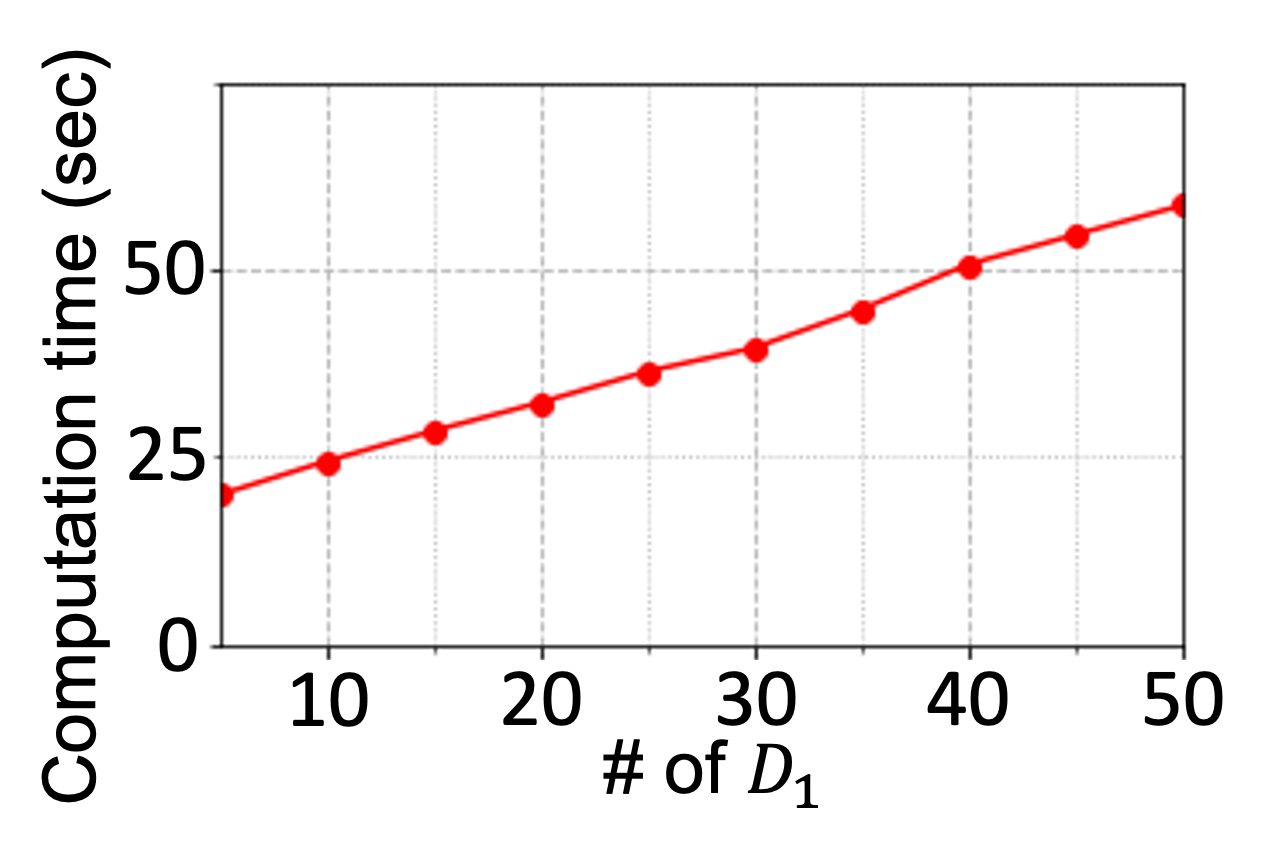} \\
    \vspace{-0.5em}
    (a) vs. $\dimN{1} = 5 \sim 50$
    \end{minipage}
    &
    \hspace{-1em}
    \begin{minipage}{0.5\columnwidth}
    \centering
    \includegraphics[width=1\linewidth]{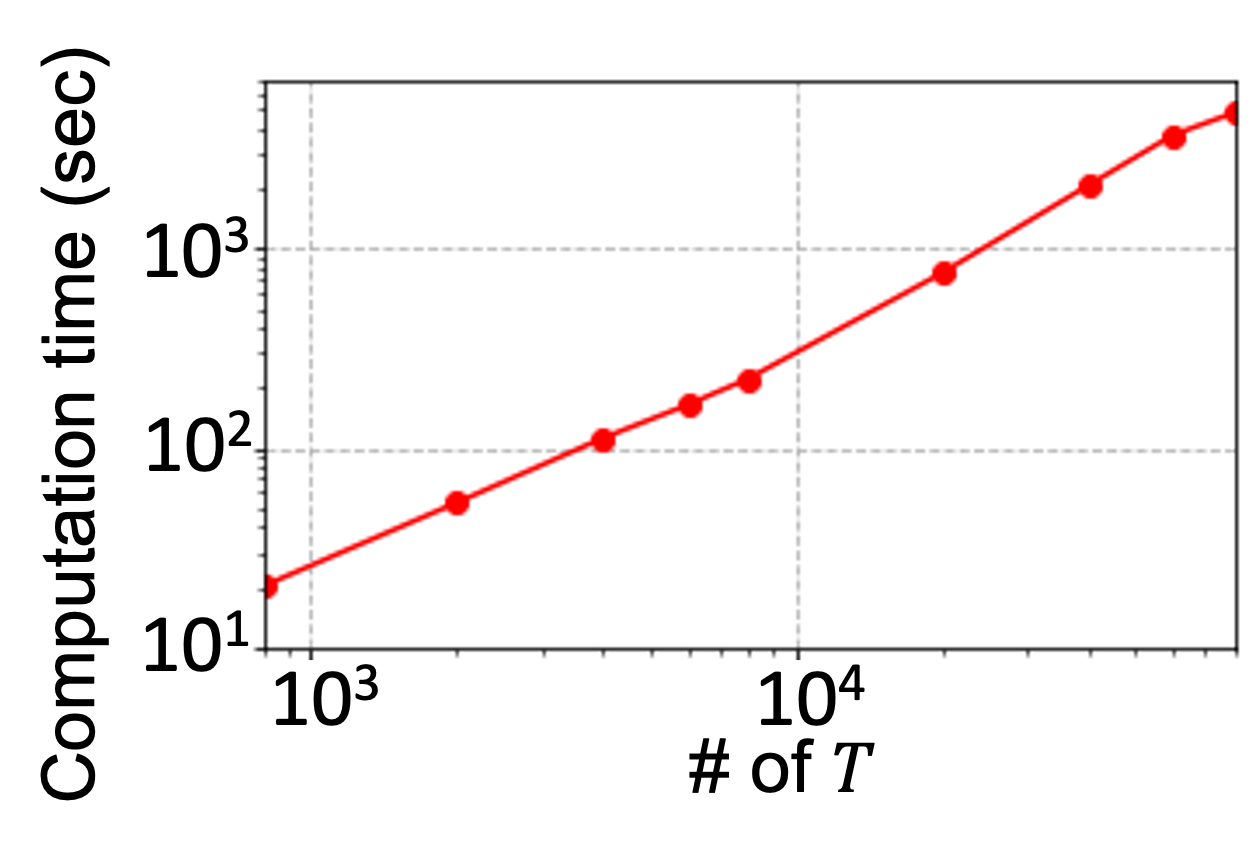} \\
    \vspace{-0.5em}
    (b) vs. $\timestep = 800\sim80000$
    \end{minipage}
    \end{tabular}
    \vspace{-1em}
    \caption{
    \method scales linearly: Computation time vs. data size,
    i.e.,
    we vary (a) $\dimN{1}$ ($\dimN{1}=5 \sim 50, \dimN{2}=5, \timestep=800$) and (b) $\timestep$ ($\dimN{1}=5, \dimN{2}=5, \timestep=800 \sim 80000$).
    }
    \label{fig:time}
    \vspace{-1em}
\end{figure}

\section{Case study}
    \label{070casestudy}
    
We perform experiments on real data to show the applicability of \method
and demonstrate how \method can be used to obtain meaningful insights from \TTS.

\subsection{Experimental setting}
\subsubsection{Datasets}
We describe our datasets in detail.

\begin{table}[ht]
    \centering
    \small
    \caption{The data size and attributes for each dataset.}
    \vspace{-1.5em}
    \label{table:dataset}
    \begin{tabular}{l|l|l|l}
        \toprule
        ID & Dataset & Size & Description \\
        \midrule
        \ecommerceID & \ecommerce & (11, 10, 1796) & \multirow{3}{*}{(query, state, day)}\\
        \vodID & \vod & (8, 10, 1796) & \\
        \sweetsID & \sweets & (9, 10, 1796) & \\
        \cmidrule{1-4}
        \covidID & \covid & (6, 10, 3652) & \multirow{2}{*}{(query, country, day)}\\
        \gafamID & \gafam & (5, 10, 1796) & \\
        \cmidrule{1-4}
        \airID & \air & (6, 12, 1461) & (pollutant, site, day)\\
        \cmidrule{1-4}
        \carAID & \carA & (6, 10, 4, 3241) & \multirow{2}{*}{(sensor, lap, driver, meter)}\\
        \carHID & \carH & (6, 10, 4, 4000) & \\
        \bottomrule
    \end{tabular}
\vspace{-1em}
\end{table}

\myparaitemize{\google (\ecommerceID $\sim$ \gafamID)}
We use the data from \google.
Each tensor contains daily web-search counts.
\covidID \covid was collected over $10$ years from Jan. 1st $2013$ to Dec. 31st $2022$ to include the effect of \covidnineteen.
Other datasets are from Jan. 1st $2015$ to Dec. 31st $2019$ to avoid the effect of \covidnineteen.
The datasets include five query sets (\apdx{\ref{apd:dataset}}).
We collect the data from two target areas:
three datasets from the top $10$ populated US states and two from the top $10$ countries ranked by GDP score.
We normalize the data every month to achieve clustering that only considers the network.

\myparaitemize{\air (\airID)}
We use \air data that collected daily concentrations of
six pollutants at $12$ nationally-controlled monitoring sites in Beijing, China
from Mar. 1st $2013$ to Feb. 29th $2016$~\cite{airdataset}.
We fill the missing values by linear interpolation and normalize the data every month.

\myparaitemize{\car (\carAID, \carHID)}
We use two automobile datasets with different driving courses.
\carAID \carA is a city course and \carHID \carH is a highway course.
We observe six sensors every meter:
Brake, Speed, GX (X Accel), GY (Y Accel), Steering angle, Fuel Economy.
Four drivers drive $10$ laps of the same course,
hence each dataset forms a \Numorder{4} tensor.
We normalize the data every $10$ meters.

The size and attributes of the datasets are given in \tabl{\ref{table:dataset}}.

\subsubsection{Hyperparameter}
To tune \method, we vary the sparsity parameter $\sparseparam = \{0.5, 1, 2, 4\}$ and set the value that produces the minimum total description cost~\eq{\ref{eq:total_cost}}.
We fix the initial window size $\window$ depending on the dataset, equal to the normalization period.
For a fair comparison, for \tagm and \ticc, we set the sparse parameter equal to \method, 
and the number of clusters equal to that found by \method.
For \ticc, we vary the regularization parameter $\beta = \{4, 16, 64, 256\}$ and set the parameter with BIC.

\subsection{Results}
\subsubsection{Applicability}
We show the usefulness of \method for analyzing real-world \TTS.

\newcommand{\expnumber}[2]{{#1}\mathrm{e}{#2}}
\begin{table}[t]
    \caption{
    The number of clusters (\# Cl.) and segments (\# Seg.), and log-likelihood (LL) of eight real-world datasets,
    comparing \method with state-of-the-art methods. 
    The \textbf{bold} font and \underline{underlines} show methods providing the best and second best LL, respectively (higher is better).
    }
    \vspace{-1em}
    \label{table:seg}
    \centering
    \resizebox{1.0\linewidth}{!}{
        \begin{tabular}{c||c|cc|cc|cc}
        \toprule
            &  & \multicolumn{2}{c}{\method} & \multicolumn{2}{c}{\tagm} & \multicolumn{2}{c}{\ticc} \\
            Data & \# Cl. & \# Seg. & LL & \# Seg. & LL & \# Seg. & LL \\
            \midrule\midrule
            \ecommerceID & $2$ & $10$ & $\mathbf{\expnumber{-1.89}{5}}$ & $485$ & $\underline{\expnumber{-1.92}{5}}$ & $3$ & $\expnumber{-1.97}{5}$ \\
            \vodID & $2$ & $2$ & $\underline{\expnumber{-1.68}{5}}$ & $527$ & $\mathbf{\expnumber{-1.65}{5}}$ & $2$ & $\underline{\expnumber{-1.68}{5}}$ \\
            \sweetsID & $2$ & $7$ & $\mathbf{\expnumber{-1.90}{5}}$ & $502$ & $\mathbf{\expnumber{-1.90}{5}}$ & $17$ & $\mathbf{\expnumber{-1.90}{5}}$ \\
            \midrule
            \covidID & $4$ & $4$ & $\underline{\expnumber{-2.85}{5}}$ & $1778$ & $\mathbf{\expnumber{-2.73}{5}}$ & $5$ & $\expnumber{-2.88}{5}$ \\
            \gafamID & $2$ & $2$ & $\underline{\expnumber{-9.28}{4}}$ & $519$ & $\mathbf{\expnumber{-9.10}{4}}$ & $3$ & $\expnumber{-9.48}{4}$ \\
            \midrule
            \airID & $6$ & $13$ & $\underline{\expnumber{-5.19}{4}}$ & $929$ & $\mathbf{\expnumber{-4.82}{4}}$ & $10$ & $\expnumber{-6.34}{4}$ \\
            \midrule
            \carAID & $11$ & $11$ & $\mathbf{\expnumber{-5.89}{5}}$ & $1300$ & $\underline{\expnumber{-6.33}{5}}$ & $12$ & $\expnumber{-9.36}{5}$ \\
            \carHID & $5$ & $12$ & $\underline{\expnumber{-1.06}{6}}$ & $974$ & $\mathbf{\expnumber{-1.02}{6}}$ & $6$ & $\expnumber{-1.16}{6}$ \\
            \bottomrule
        \end{tabular}
    }
    \vspace{-1em}
\end{table}
\myparaitemize{Modeling accuracy}
Since there are no labels for \TTS,
we review the modeling accuracy of \method by comparing the number of segments and the log-likelihood,
which explains the goodness of clustering according to our objective function based on MDL.
We use cluster assignments to calculate the log-likelihood (\eq{\ref{eq:ll}}).
\tabl{\ref{table:seg}} shows the results.
\method finds a reasonable number of segments and a higher log-likelihood than \ticc.
\tagm switches clusters with the transition matrix of HMM.
This works well on synthetic datasets when there are clear transitions.
However, it is not suitable for real-world datasets, which contain noises and whose network changes gradually.
As a result, \tagm finds the cluster assignments that maximize the log-likelihood regardless of the number of segments.
\ticc assigns neighboring time steps to the same cluster using a penalty $\beta$.
Thus, its number of segments is close to \method.
However, \ticc is not suitable for tensors, and the log-likelihood is worse than \method for most datasets.

\myparaitemize{Computation time}
\begin{figure}
    \centering
    \includegraphics[width=0.9\linewidth]{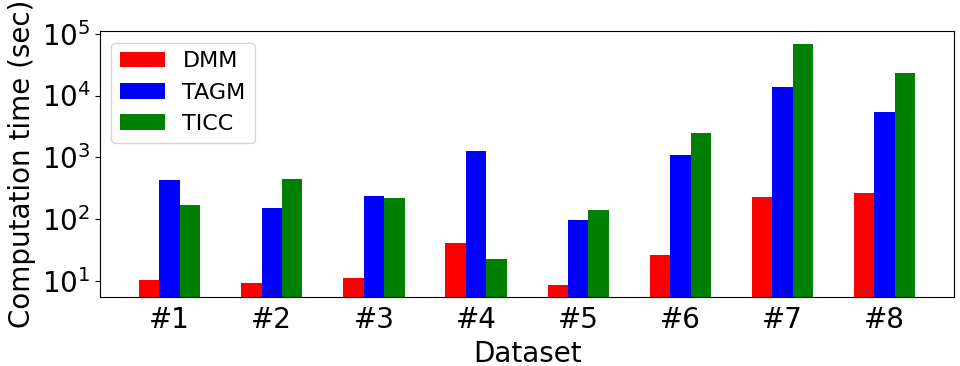}
    \vspace{-1em}
    \caption{Computation time of \method: our method surpasses its baselines.
    It is up to $300\times$ faster than \ticc.}
    \label{fig:case_time}
    \vspace{-1em}
\end{figure}
We compare the computation time needed for processing real data in \fig{\ref{fig:case_time}}.
\method is the fastest for most datasets since it infers the network for each mode.
In contrast, \tagm and \ticc compute the entire network at once.
Therefore, they are more affected by the number of variables at each mode than \method, resulting in a longer computation time.
Note that the computation time of \tagm and \ticc at \secorder \TTS is comparable to \method.

\subsubsection{Interpretability}
\begin{figure}[t]
    \centering
    \includegraphics[width=1.0\linewidth]{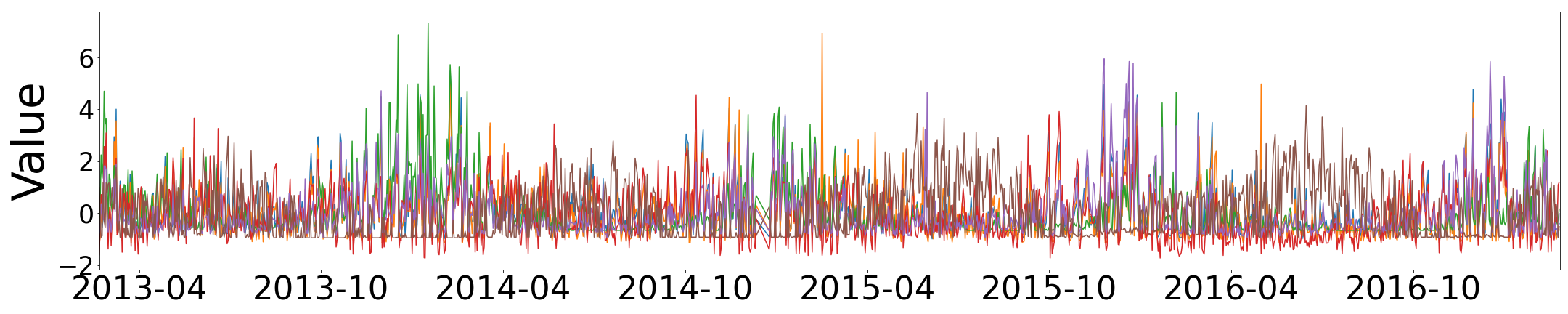} \\
    (a) Original sensor data at a site (Aoti Zhongxin)
    \includegraphics[width=1.0\linewidth]{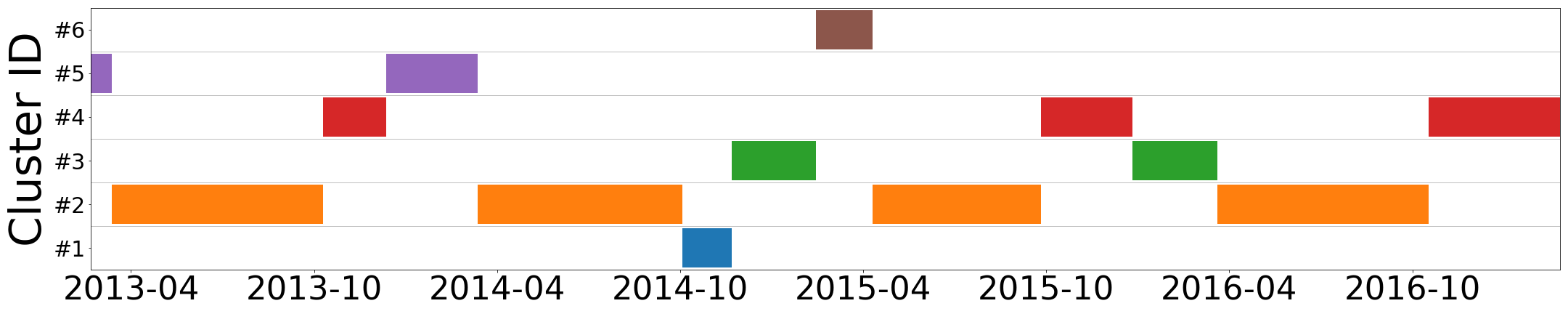} \\
    (b) \method assigns every Apr. $\sim$ Oct. to cluster \clID{2} 
    \includegraphics[width=1.0\linewidth]{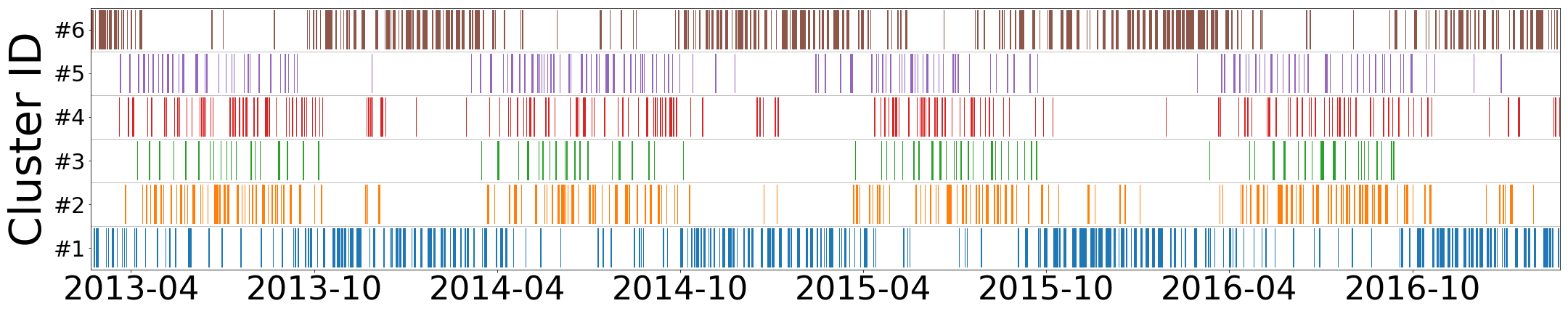} \\
    (c) \tagm causes frequent switching
    \includegraphics[width=1.0\linewidth]{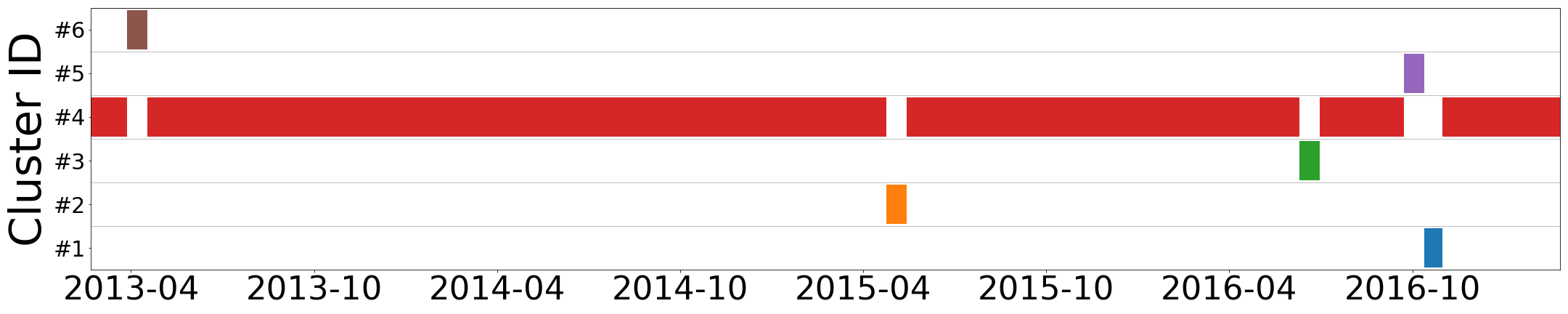} \\
    (d) \ticc assigns most periods to cluster \clID{4} \\
    \vspace{-1em}
    \caption{
    \method demonstrates effective cluster assignments on the \airID \air dataset.
    (a) Original tensor time series data.
    Cluster assignments of (b) \method, (c) \tagm, and (d) \ticc.
    }
    \label{fig:air_assignments}
    \vspace{-1em}
\end{figure}

\begin{figure}[t]
    \centering
    \hspace{-0em}
    \begin{tabular}{cc}
    
    \begin{minipage}{.71\columnwidth}
    \centering
    \vspace{.5em}
    \includegraphics[width=1\linewidth]{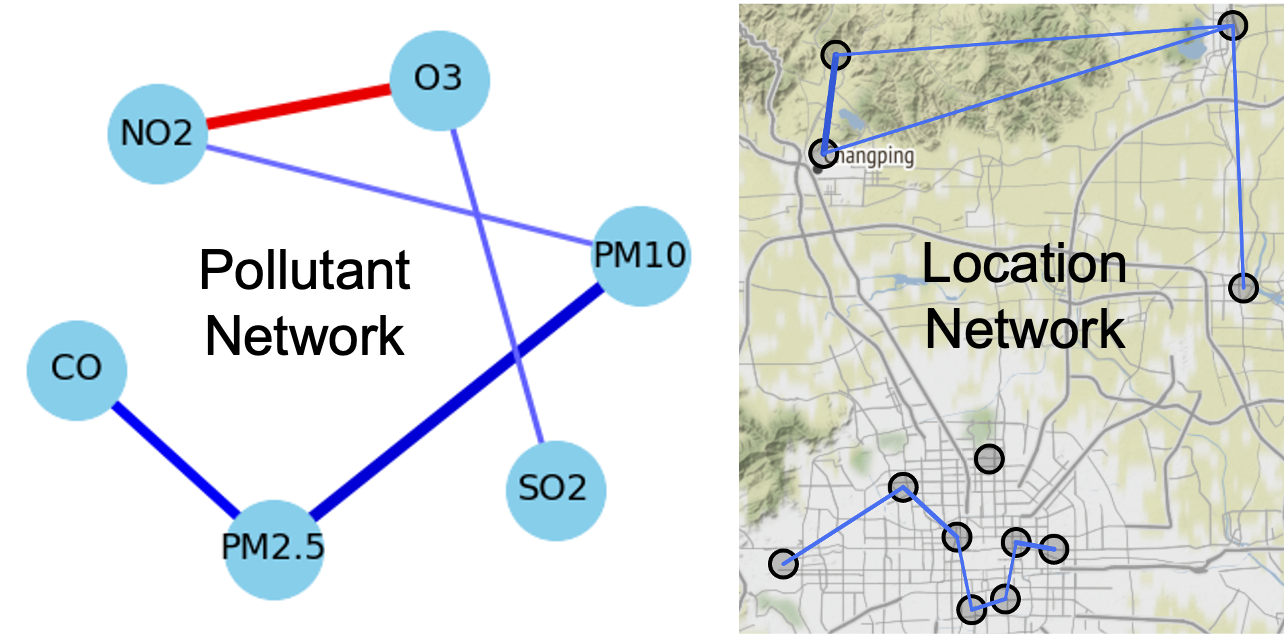} \\
    \vspace{0em}
    (a) \method networks of cluster \clID{2} \\
    \end{minipage}
    
    \begin{minipage}{.26\columnwidth}
    \centering
    \includegraphics[width=1\linewidth]{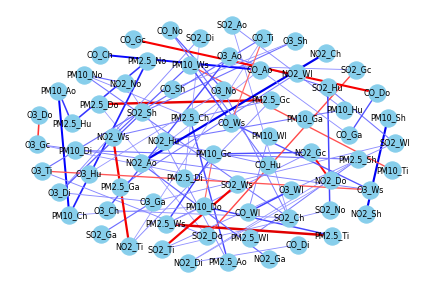} \\
    \vspace{-.5em}    
    (b) \tagm \\
    \includegraphics[width=1\linewidth]{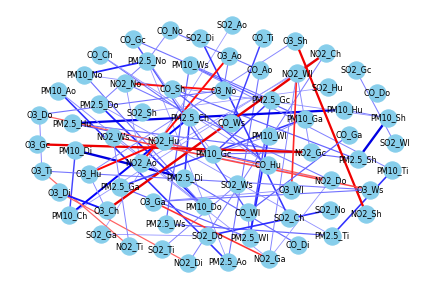} \\
    \vspace{-.5em}    
    (c) \ticc \\
    \end{minipage}
    
    \end{tabular}
    \vspace{-1em}
    \caption{
    Networks obtained for each method for the \airID \air dataset:
    (a) \method detects a pollutant network and a location network,
    where it is easy to understand the key relationships within the cluster.
    (b) \tagm (cluster \clID{6}) and (c) \ticc (cluster \clID{4}) find a complex network, which is difficult to interpret.
    }
    \label{fig:air_network}
    \vspace{-1em}
\end{figure}
We show how the clustering results presented by \method make sense.
We have already shown the results of \method for clustering over \covidID \covid in Section 1 (see \fig{\ref{fig:covid}}).
Please also see the results in \ecommerceID \ecommerce in \apdx{\ref{apd:results}}.

\myparaitemize{\air}
We compare the clustering results of \method, \tagm and, \ticc over \airID \air regarding cluster assignments (\fig{\ref{fig:air_assignments}}) and obtained networks (\fig{\ref{fig:air_network}}).
\fig{\ref{fig:air_assignments}}~(a) shows the original sensor data at Aoti Zhongxin. 
\fig{\ref{fig:air_assignments}}~(b) shows that \method assigns Apr. through Oct. of each year to cluster \clID{2},
capturing the yearly seasonality~\cite{airdataset}. 
The cluster assignments of \tagm (see \fig{\ref{fig:air_assignments}}~(c)) switch frequently,
and \ticc (see \fig{\ref{fig:air_assignments}}~(d)) assigns most of the period to cluster \clID{4}.
Both cluster assignments are far from interpretable.
\fig{\ref{fig:air_network}} shows the networks obtained with each method.
The cluster of \method (see \fig{\ref{fig:air_network}}~(a)) includes the pollutant network and the location network.
The pollutant network has a strong edge between PM2.5 and PM10, and the location network, whose nodes are plotted on the map, has edges only between closely located nodes, both of which match our expectation and accordingly indicate that \method discovers interpretable networks.
\tagm and \ticc (see \fig{\ref{fig:air_network}}~(b)(c)) find a network for all variables.
Although the networks are sparse, the large number of nodes and edges hampers our understanding of the networks.
Due to the simplicity of networks generated by \method, their interpretability surpasses those of other methods~\cite{du2019techniques}.
Consequently, \method provides interpretable clustering results that can reveal underlying relationships among variables of each mode and is suitable for modeling and clustering \TTS.

\section{Conclusion}
    \label{080conclusions}
    In this paper, we proposed an efficient tensor time series subsequence clustering method, namely \method.
Our method characterizes each cluster by multiple networks, each of which is the dependency network of a corresponding non-temporal mode.
These networks make our results visible and interpretable, enabling the multifaceted analysis and understanding of tensor time series.
We defined a criterion based on MDL that allows us to find clusters of data and determine all user-defined parameters.
Our algorithm scales linearly with the input size and thus can apply to the massive data size of a tensor.
We showed the effectiveness of \method via extensive experiments using synthetic and real datasets.

\begin{acks}
The authors would like to thank the anonymous referees for their valuable comments and helpful suggestions.
This work was supported by
JSPS KAKENHI Grant-in-Aid for Scientific Research Number
JP21H03446,  
JP22K17896,  
NICT JPJ012368C03501, 
JST CREST JPMJCR23M3, 
JST-AIP JPMJCR21U4. 
\end{acks}

\bibliographystyle{ACM-Reference-Format}
\balance
\bibliography{ref_gl}

\newpage
\appendix
    \label{100appendix}

\section{Proposed Model} \label{apd:sym}
\tabl{\ref{table:symbol}} lists the main symbols we use throughout this paper.

\section{Algorithms}
\subsection{\cpdet}\label{apd:alg}
\alg{\ref{alg:cpdet}} shows the overall procedure for \cpdet,
which is a subalgorithm of \alg{\ref{alg:clustering}}.
For clarity, we describe the total description cost as $\costT{\tensor}{ \{\Theta \}}$. The cluster assignment set for $\Theta[\id]$ is a corresponding segment.

\subsection{Proof of Lemma \ref{lemma:time}}\label{apd:proof}
\begin{proof}
    The computational cost of the \method depends largely on the number of \cpdet iterations and the cost of inferring $\modelset$ at each iteration.
    Consider that all segments are eventually merged.
    Since the total computational time needed to infer $\modelset$ is the sum of $\{ \invcovN{1}, \cdots, \invcovN{\ndim} \}$ inferences,
    we discuss the case of $\invcovN{n}$.
    When $\timestep \prod_{m=1 (m \neq n)}^\ndim \dimN{m} \gg \dimN{n}$, at each iteration, inferring $\invcovN{n}$ for all segments takes $O(\dimN{n} \timestep \prod_{m=1 (m \neq n)}^\ndim \dimN{m})$ thanks to ADMM.
    If the number of segments is halved at each iteration, the number of iterations is $\log_{2}|\windowset|$.
    If the number of segments decreases by one at each iteration, the number of iterations is $|\windowset|$, but this is unlikely to happen.
    $\timestep \gg \log_{2}|\windowset|$, and so the computation cost related to $\invcovN{n}$ is $O(\timestep \prod_{m=1}^\ndim \dimN{m})$.
    Since $\timestep, \dimN{n} \gg \ndim$,
    the repetition of inference for each mode is negligible.
    Therefore, the time complexity of \method is 
    $O(\timestep \prod_{m=1}^\ndim \dimN{m})$.
\end{proof}

\section{Case Study}\label{apd:case}
\subsection{Datasets}\label{apd:dataset}
We describe the query set we used for \google in \tabl{\ref{table:queryset}}.

\subsection{Results}\label{apd:results}
\myparaitemize{Total description cost}
We compare the total description cost of \method with \tagm and \ticc on real-world datasets in \fig{\ref{fig:case_costT}}.
As shown, \method achieves the lowest total description cost of all the datasets.
\tagm has many segments, which results in the large coding length cost.
\ticc is not capable of handling tensor, which results in higher data coding cost compared with \method.

\myparaitemize{\ecommerce}
We demonstrate how effectively \method works on the \ecommerceID \ecommerce dataset.
\fig{\ref{fig:commerce}} shows the result of \method for clustering over \ecommerceID \ecommerce.
\fig{\ref{fig:commerce}}~(a) shows the clustering results of the original \TTS, where each color represents a cluster.
\method finds 10 segments and two clusters.
We name the blue cluster \quomarkit{Dairy products} and the pink cluster \quomarkit{Online sales.}
\method assigns every Nov. to \quomarkit{Online sales}, the period of Black Friday and Cyber Monday.
\fig{\ref{fig:commerce}}~(b) shows the query and state networks for each cluster.
The query network of \quomarkit{Daily products} shows that there are edges between the local daily products companies (\quomark{costco}, \quomark{walmart}, and \quomark{target}).
On the other hand, with the query network of \quomarkit{Online sales}, there are many edges, especially related to large e-commerce companies (\quomark{amazon} and \quomark{ebay}),
and the state network shows that the top four populated states (\quomark{CA}, \quomark{TX}, \quomark{FL}, and \quomark{NY}) form edges,
indicating the similarity of online shopping among the big states.

\begin{table}[ht]
    \centering
    \small
    \caption{Symbols and definitions.}
    \vspace{-1.5em}
    \label{table:symbol}
    \begin{tabular}{l|p{6cm}}
        \toprule
        Symbol & Definition \\
        \midrule
        $\dimN{n}$ & Number of variables at \modeN{n} \\
        $\ndim$ & Number of modes excluding temporal mode\\
        $\timestep$ & Number of timestamp\\
        $\tensor$ & \Norder{$\ndim+1$} \TTS, i.e., $\tensor =\{ \tensor_1,\tensor_2,\dots,\tensor_\timestep \} \in \mathbb{R}^{\dimN{1} \times \cdots \times \dimN{\ndim} \times \timestep}$\\
        $\tensor_{t}$ & \norder tensor at \thN{t} time step, i.e., $\tensor_{t} \in \mathbb{R}^{\dimN{1} \times \cdots \times \dimN{\ndim}}$\\
        $\probdim$ & Total product of variables excluding $\timestep$, i.e., $\probdim = \prod_{n=1}^{\ndim}\dimN{n}$ \\
        $\probdimN{n}$ & Total product of variables excluding $\dimN{n}$ and $\timestep$, i.e., $\probdimN{n} = \prod_{m=1 (m \neq n)}^\ndim \dimN{m}$\\
        \midrule
        $\ncluster$ & Number of clusters\\
        $\nsegment$ & Number of segments\\
        $\cp$ & Cut points, i.e., $\cp=\{ \cp_1,\cp_2,\dots,\cp_\nsegment \}$\\
        $\cp_i$ & Starting point of segment $i$, i.e., $\cp_{1}=1, \cp_{\nsegment+1}=\timestep+1$\\
        $\modelset$ & Model parameter set, i.e., $\modelset=\{ \model_1,\model_2,\dots,\model_\ncluster \}$\\
        $\model$ & Hierarchical Teoplitz matrix of shape $ \model \in \mathbb{R}^{\probdim \times \probdim} $ consists of $\{ \invcovN{1}, \cdots, \invcovN{\ndim} \}$ \\
        $\invcovN{n}$ & Precision matrix of mode-n, i.e., $\invcovN{n} \in \mathbb{R}^{\dimN{n} \times \dimN{n}}$ \\
        $\assignset$ & Cluster assignment set, i.e., $\assignset=\{ \assign_1,\assign_2,\dots,\assign_\ncluster \}$\\
        $\fullmodel$ & Cluster parameter set, i.e., $\fullmodel=\{ \assignset, \modelset \}$\\
        \midrule
        $\costA{\assignset}$ & Coding length cost: description complexity of $\assignset$\\
        $\costM{\modelset}$ & Model coding cost: description complexity of $\modelset$\\
        $\costC{\tensor}{\fullmodel}$ & Data coding cost: negative log-likelihood of $\tensor$ given $\fullmodel$\\
        $\costL{\modelset}$ & \lonenorm cost: penalty for $\modelset$\\
        $\costT{\tensor}{\fullmodel}$ & Total description cost: total cost of $\tensor$ given $\fullmodel$\\
        \bottomrule
    \end{tabular}
\end{table}

\begin{algorithm}
	\caption{\textsc{\cpdet}$(\tensor,\cp)$}
	\label{alg:cpdet}
    \small
	\begin{algorithmic}[1]
	\STATE {\bf Input:}  \Norder{\ndim+1} \TTS $\tensor$ and initial cut points set $\cp$
	\STATE {\bf Output:} The best cut point set $\cp$
    \REPEAT
		\STATE $\id=0$,~$\cp_{new} =\phi$;
			\STATE $\Theta_{S}=\{ \theta_{\cp_0:\cp_1},\theta_{\cp_1:\cp_2},\dots,\theta_{\cp_\nsegment:\cp_{\nsegment+1}} \}$
			\STATE $\Theta_{E}=\{ \theta_{\cp_0:\cp_2},\theta_{\cp_2:\cp_4},\dots \}$
			\STATE $\Theta_{O}=\{ \theta_{\cp_1:\cp_3},\theta_{\cp_3:\cp_5},\dots \}$
		\WHILE{$\id<length(\tensor)$}
			\IF{$\id$ is even}
				\STATE
					$\Theta_{Left}=\Theta_{O}$;~~~~$\Theta_{Right}=\Theta_{E}$;
				\STATE
					$\id_{Left}=\lfloor \id/2 \rfloor$;~~~~
					$\id_{Right}=\lfloor \id/2 \rfloor +1$;
			\ELSIF{$\id$ is odd}
				\STATE
					$\Theta_{Left}=\Theta_{E}$;~~~~
					$\Theta_{Right}=\Theta_{O}$;
				\STATE
					$\id_{Left}=\lfloor \id/2\rfloor+1$;~~~~
					$\id_{Right}=\lfloor \id/2\rfloor+1$;
			\ENDIF
   			\STATE $C_{solo}=\costT{\tensor}{ \{ \Theta_{S}[\id],\Theta_{S}[\id+1],\Theta_{S}[\id+2] \} }$;
			\STATE $C_{left}=\costT{\tensor}{ \{ \Theta_{Left}[\id_{Left}],\Theta_{S}[\id+2] \} }$;
			\STATE $C_{right}=\costT{\tensor}{ \{ \Theta_{S}[\id],\Theta_{Right}[\id_{Right}] \} }$;
			\IF{$min(C_{solo},C_{left},C_{right})=C_{solo}$}
				\STATE
					$\cp_{new} = \cp_{new} \cup \cp[\id]$;~~~~
					$\id+=1$;
			\ELSIF{$min(C_{solo},C_{left},C_{right})=C_{left}$}
				\STATE
					$\cp_{new} = \cp_{new} \cup \cp[\id+1]$;~~~~
					$\id+=2$;
			\ELSIF{$min(C_{solo},C_{left},C_{right})=C_{right}$}
				\STATE
					$\cp_{new} = \cp_{new} \cup \cp[\id],\cp[\id+2]$;~~~~
					$\id+=3$;
			\ENDIF
		\ENDWHILE
		\STATE $\cp = \cp_{new}$;
		\UNTIL{$\cp$ is stable;}
		\RETURN $\cp$;
	\end{algorithmic}
	\normalsize
\end{algorithm}
\begin{table}[ht]
    \centering
    \small
    \caption{\google query set.}
    \vspace{-1em}
    \label{table:queryset}
    \renewcommand{\arraystretch}{1.2}
    \begin{tabular}{l|l}
        \toprule
        Name & Query \\
        \midrule
        \ecommerceID \ecommerce &
        \begin{tabular}{l}
        Amazon/Apple/BestBuy/Costco/Craigslist/Ebay/\\
        Homedepot/Kohls/Macys/Target/Walmart
        \end{tabular} \\
        \vodID \vod &
        \begin{tabular}{l}
        AppleTV/ESPN/HBO/Hulu/Netflix/Sling/\\
        Vudu/YouTube
        \end{tabular} \\
        \sweetsID \sweets &
        \begin{tabular}{l}
        Cake/Candy/Chocolate/Cookie/Cupcake/\\
        Gum/Icecream/Pie/Pudding
        \end{tabular} \\
        \covidID \covid &
        \begin{tabular}{l}
        Covid/Corona/Flu/Influenza/Vaccine/Virus
        \end{tabular} \\
        \gafamID \gafam &
        \begin{tabular}{l}
        Amazon/Apple/Facebook/Google/Microsoft
        \end{tabular} \\
        \bottomrule
    \end{tabular}
\vspace{-1em}
    \renewcommand{\arraystretch}{1.0}
\end{table}

\begin{figure}[t]
    \centering
    \includegraphics[width=1\linewidth]{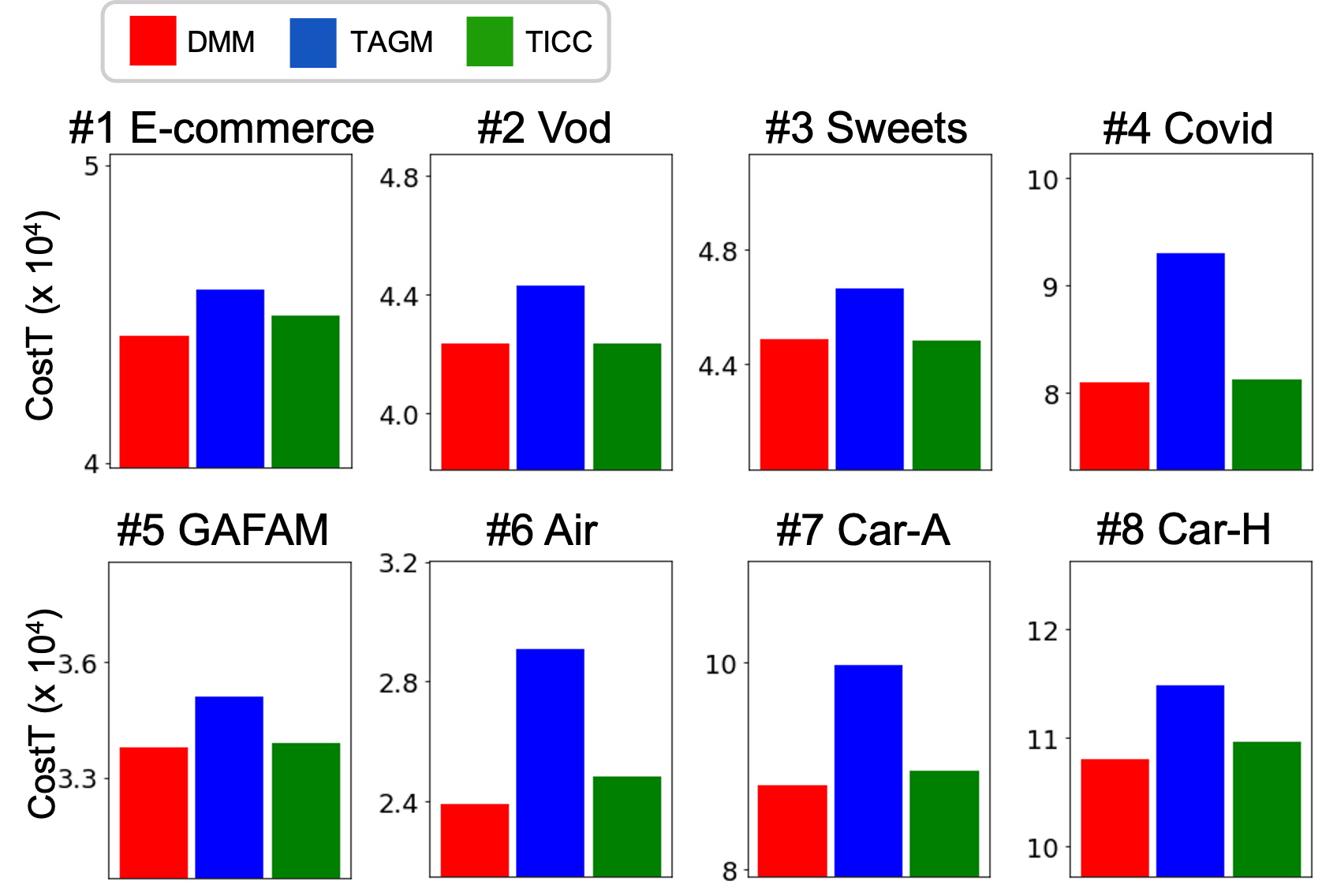}
    \vspace{-2em}
    \caption{Total description cost of \method: our method consistently outperforms its baselines (lower is better).}
    \label{fig:case_costT}
    \vspace{-1em}
\end{figure}
\begin{figure}[t]
    \centering
    \includegraphics[width=1\linewidth]{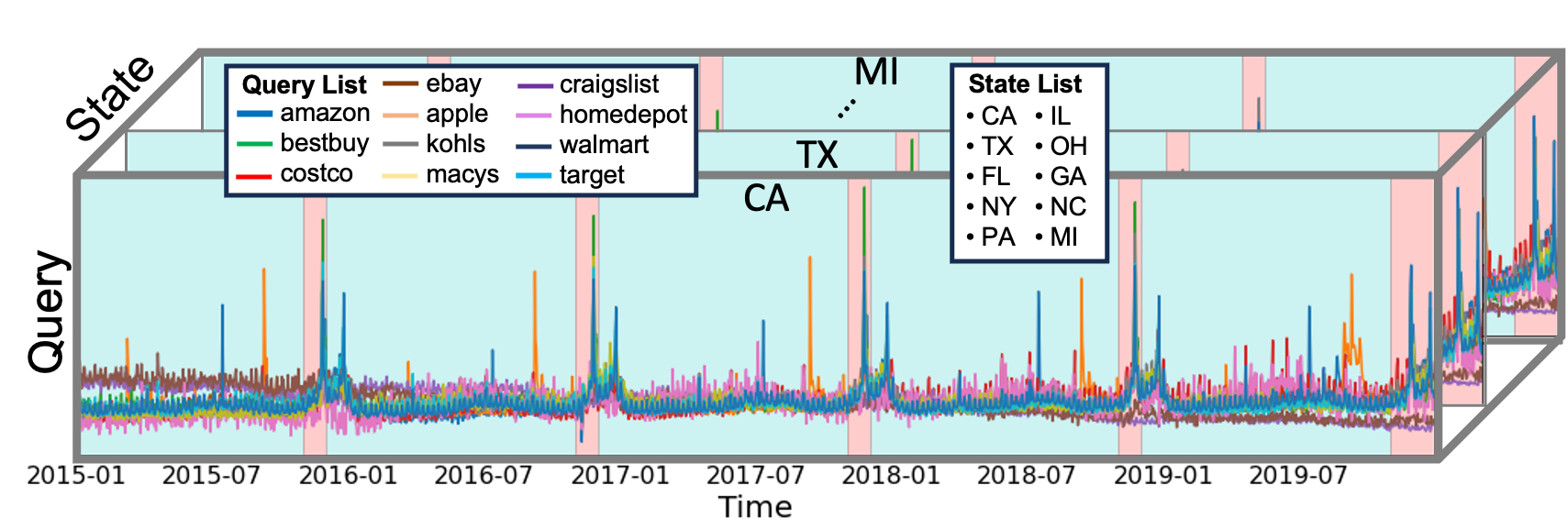} \\
    (a) Clustering results on the original tensor time series
    \includegraphics[width=1\linewidth]{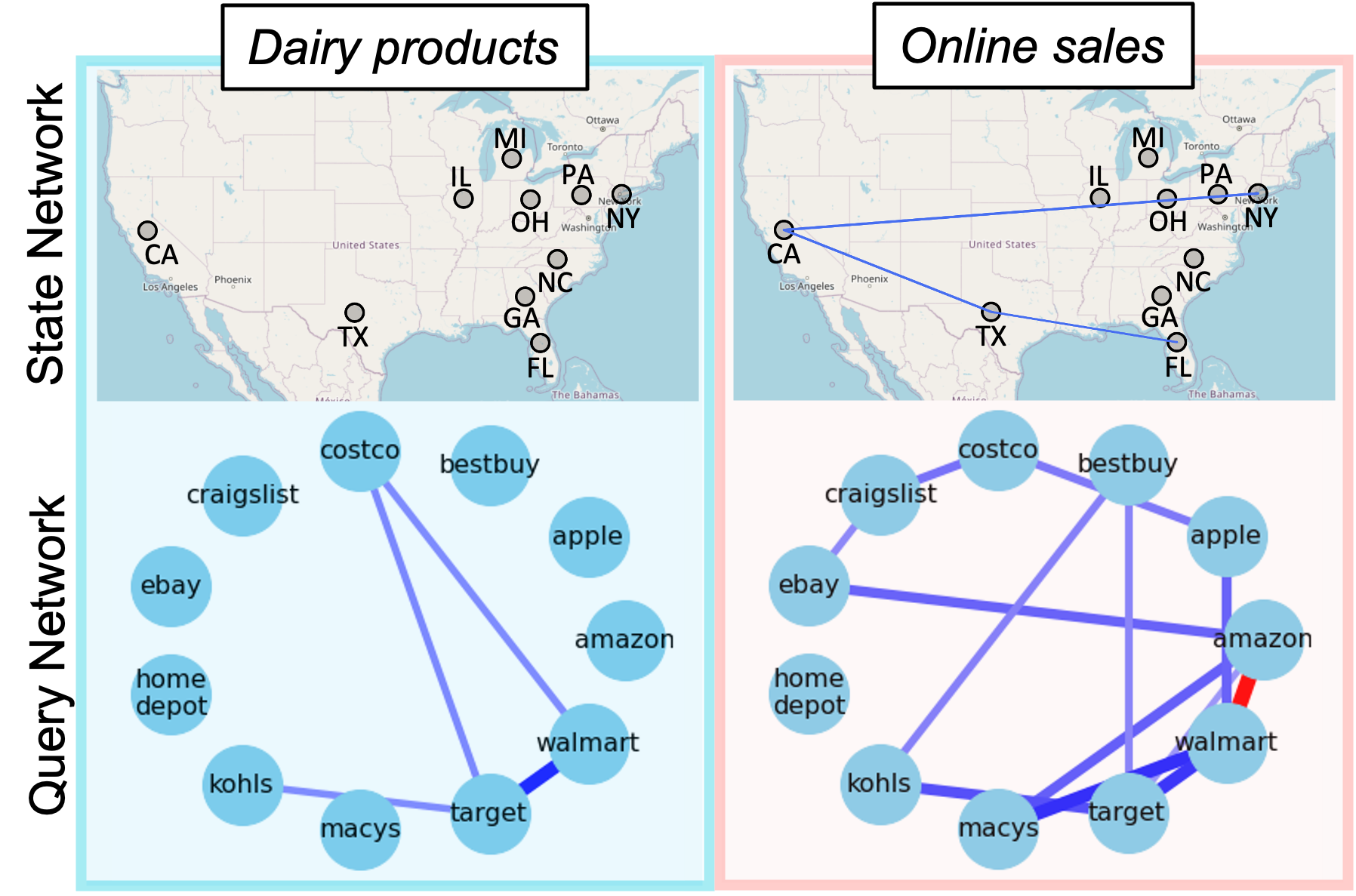}\\
    (b) State and query networks\\
    \caption{
        Effectiveness of \method on \ecommerceID \ecommerce dataset:
        (a) it splits the tensor
        into two clusters shown by colors
        (i.e.,
        \#blue$\rightarrow$\quomarkit{Dairy products} and 
        \#pink$\rightarrow$\quomarkit{Online sales}).
        (b) each cluster has distinct state and query networks.
        }
    \label{fig:commerce}
    \vspace{-1em}
\end{figure}

\end{document}